\newtheorem{Problem}{Problem}
\newtheorem{theorem}{Theorem}[section]
\newtheorem{lemma}[theorem]{Lemma}
\newtheorem{proposition}[theorem]{Proposition}
\theoremstyle{definition}
\newtheorem{example}{Example}[section]
\theoremstyle{definition}
\newcounter{texercise}
\newwrite\solout
\def\openoutsol{\immediate\openout\solout\jobname.sol}  \def\writesol#1{\immediate\write\solout{\noexpand\processsol{\thetexercise}{#1}}}
\newcounter{mytheorem}[section] 
\newcommand{\hui}[1]{{\color{red} #1}}
\newcommand{\calL}{\mathcal{L}}
\newcommand{\bR}{\mathbb{R}}
\newcommand{\bRd}{\mathbb{R}^d}
\newcommand{\bP}{\mathbb{P}}
\newcommand{\E}{\mathbb{E}}
\newcommand{\bbP}{\mathbb{P}}
\newcommand{\mcal}[1]{\mathcal{#1}}
\newcommand{\sbb}{\text{Schr{\"o}dinger Bridge }}
\newcommand{\bsbracket}[1]{{\Big[ #1 \Big]}}
\newcommand{\bsparath}[1]{{\Big( #1 \Big)}}
\newcommand{\cbrace}[1]{{\lbrace{#1}\rbrace }}
\numberwithin{equation}{section}
\title{The Ensemble Schr{\"o}dinger Bridge filter for Nonlinear Data Assimilation}
\author{Hui Sun \footnote{The author is supported by the Research Development Fund (RDF) Xi’an Jiaotong-Liverpool University (XJTLU), No.: RDF-24-02-029} \thanks{
Department of Financial and Actuarial Mathematics, School of Mathematics and Physics, Xi’an Jiaotong-Liverpool University, Suzhou 215123, China. Email: Hui.Sun@xjtlu.edu.cn.} }
\date{}
\begin{document}
\maketitle
\begin{abstract}
This work introduces a novel nonlinear optimal filtering method, termed the Ensemble Schr{\"o}dinger Bridge nonlinear filter (EnSBF). The proposed filter combines the standard prediction step with a diffusion generative modeling based analysis step, thereby completing one full filtering update. It is derivative-free, training-free, and also parallelizable. Numerical experiments demonstrate that the proposed algorithm in its original form performs effectively for nonlinear dynamics and nonlinear observation processes, including chaotic systems in mildly high dimensions. The results also show that the method demonstrates competitive performance against the classical approaches such as the ensemble Kalman filter and particle filter across a range of tests with varying degrees of nonlinearity. At the cost of introducing bias, it is discovered that adopting the coordinate-wise localization technique enables EnSBF to achieve good state-estimation performance in high-dimensional systems with nonlinear observations. Future work will focus on addressing the sparse observation problem, making the algorithm more time efficient and developing a rigorous convergence theory.  
\end{abstract}
\textbf{keywords:} Diffusion generative models, optimal filtering, training-free/derivative-free generative models, Schr{\"o}dinger Bridge, stochastic optimal control, data assimilation.   
\section{Introduction}
Optimal filtering represents an important component of data assimilation task as it finds wide applications in weather forecasting, material sciences, biology, and finance \cite{feng2,feng3, Evensen, Ramaprasad}. Optimal filtering is the process of estimating the hidden true states based on observed noisy states. 

In science and engineering communities, the filtering procedure is commonly formulated within the two-step Bayesian prediction--analysis framework, whose
goal is to accurately characterize, or approximate, the prior and posterior filtering densities. While these target densities can sometimes be approximated using parametric methods, such as the classical Kalman filter, they are more often represented and propagated using \textit{ensemble methods}.

The workhorses of ensemble methods are the underlying simulation-based approaches. Two classical and widely used methods within this framework are the
\textit{ensemble Kalman filter} (EnKF) and the \textit{particle filter} (PF). Compared with the original Kalman filter, the EnKF employs an ensemble of
particles to estimate the statistical properties of the system and represents the filtering densities through Gaussian approximations. Consequently, the
target distribution is approximated within a Gaussian framework. This approach can be limiting, since the true filtering density may be non-Gaussian and even
multimodal. As a result, the EnKF is most effective for linear or mildly nonlinear dynamics, but may fail to accurately track highly nonlinear systems.

The PF approach, also known as the sequential Monte Carlo method, does not rely on Gaussian or other parametric distributional assumptions. Instead, it uses an ensemble of weighted particles to approximate the filtering densities. In this representation, all information about the filtering densities is encoded in the particle ensemble. Therefore, the quality of the particle approximation determines the accuracy of the density approximation; in particular, it is important whether the particles concentrate near the modes of the target density and whether they adequately capture its overall configuration. Although PFs often perform well in nonlinear settings, their performance typically deteriorates as the dimension increases due to the curse of dimensionality. In high-dimensional regimes, a very large number of particles is required to obtain a satisfactory approximation of the target distribution, which substantially limits the practical applicability of this approach.

Recently, by leveraging ideas from generative diffusion models \cite{ys1,ys4,ho1,sb2,sb9}, a new ensemble-based simulation approach, namely the \textit{ensemble score filter} (EnSF), has been developed and shown to be highly effective \cite{feng1,feng4,feng5,feng6}. Unlike particle filters, the EnSF encodes distributional information through a \textit{score function}. The updates of the approximated distributions in both the prediction and analysis
steps are therefore reduced to updates of the corresponding score functions. Particles are then generated through a diffusion process, or stochastic
differential equation (SDE), whose drift incorporates the learned or estimated score function. Although this approach has been shown to be effective and efficient in high-dimensional settings, it introduces a structural model error in the simulation procedure, as reported in \cite{feng6}. In particular, an
approximation is used in the construction of the posterior score, which leads to bias in tracking the filtering densities. This limitation becomes more
pronounced in lower-dimensional settings with noisy observations; see, for  example, Example~4 in Section~3.2.4 of \cite{feng6}.

In this work, still within the ensemble-based simulation framework, we design a new diffusion-assisted nonlinear filter, termed the \textit{Ensemble
Schr{\"o}dinger Bridge filter} (EnSBF). The proposed method performs the analysis step, namely particle generation, through a diffusion SDE that we refer
to as the \textit{Schr{\"o}dinger Bridge SDE} (SB SDE). This SDE is induced by the solution of the \textit{Schr{\"o}dinger Bridge problem} (SBP) \cite{leonard,chen1,sb8}. Based on the SBP formulation, various numerical and deep-learning-based algorithms have been developed \cite{sb1,sb2,sb3,sb5,sb6,sb8,sb9,pham1,sb10}. Of particular interest are \cite{pham1} and \cite{sb10}, where generative tasks can be performed without
explicitly training neural networks. In the present work, we further extend this training-free approach to the dynamic filtering setting and obtain a nonlinear  data filter.

In contrast to the EnSF, the proposed method is also derivative-free: throughout the entire update procedure, no automatic differentiation with respect to either the state variable or the observation variable is required.  Recently, the authors of \cite{feng1} proposed the iterative EnSF (IEnSF) \cite{feng7} to mitigate the structural error in the original EnSF score approximation. By representing the prior as a Gaussian mixture, they derive an analytical decomposition of the time-dependent posterior score and introduce an iterative procedure to approximate the remaining intractable terms. The method achieves improved posterior approximation under nonlinear observations while remaining within the diffusion-model framework. EnSBF in its original form introduces no additional model-form or surrogate approximation in the posterior update. Its accuracy may nevertheless deteriorate in high dimensions because of weight degeneracy. The remaining errors in the particle generation step arise from the temporal discretization of the artificial-time SDE, the Monte Carlo approximation of the drift, and the finite-ensemble approximation of the forecast/posterior distributions.

Extensive numerical experiments demonstrate that the EnSBF achieves competitive performance compared with the classical PF and EnKF in the lower dimensional regime. In particular, it provides a more reliable characterization of the filtering distribution than the EnSF in low-dimensional regimes. At the same time, we establish a direct theoretical connection between the SBP-induced SDE and the reverse SDE appearing in score-based diffusion models. Based on this connection, we argue that one could switch between the EnSBF and EnSF approaches depending on whether the filtering problem lies in a low- or high-dimensional setting. In the meantime, in the high dimensional regime, we deliberately introduce bias by using the coordinate localization technique so that EnSBF can perform effectively in high dimensions for state-estimation purpose. 

The goal of the present work is to introduce the EnSBF framework, connect it with other diffusion based filters, i.e. EnSF and provide numerical evidence across representative nonlinear filtering examples. Convergence analysis and more comprehensive comparison with other methods, such as the local ensemble transform Kalman filter (LETKF), the Iterative EnSF (IEnSF) and localized particle filters over other higher dimensional test problems will be left as future work.

The remainder of the paper is organized as follows. Section~2 provides a brief overview of the Schr{\"o}dinger bridge (SB) problem and its analytical solution, and introduces the numerical algorithm that forms the computational backbone of the EnSBF developed in this work. We also establish a direct connection between the reverse-time SDE used in score-based diffusion models and the controlled SDE associated with the SB problem. This connection clarifies the relationship between the proposed EnSBF and EnSF. Section~3 reviews the optimal filtering problem and presents the main algorithm proposed in this paper, followed by a series of numerical experiments involving systems with varying degrees of nonlinearity and state dimensions. Finally, we summarize the main findings and discuss directions for future research. The appendix collects the theoretical background on the Schr{"o}dinger bridge problem.

\section{The Schr{\"o}dinger bridge problem and the static algorithm}
 In this section, we introduce the Schr{\"o}dinger Bridge problem, present its solutions, and describe the algorithm derived from this formulation. Since the accuracy of a nonlinear filter depends critically on the reliability of the underlying simulation procedure, this procedure should neither rely on restrictive distributional assumptions, as in the EnKF, nor introduce structural model error, as in the approximation of the posterior score in EnSF. The static algorithm derived from the solution of the Schr{\"o}dinger Bridge problem leads to a simple one-step simulation procedure that introduces no such error. It therefore serves as the computational backbone for the dynamic nonlinear filter, namely the EnSBF which will be developed in the next section. We remark that this static algorithm is a simple transformation of the Schr{\"o}dinger--F{\"o}llmer sampler and has already been studied in \cite{pham1,sb10}. The detailed construction of the dynamic EnSBF is deferred to the next section.

We also present two simulation experiments based on this static algorithm. These examples demonstrate that the method can generate samples that are statistically close to the given data samples, even when the target data exhibit complex structures. We emphasize that the algorithm only uses Euler discretization, ensemble-based approximation of the relevant expectations and ensemble approximation of the posterior distributions and no additional structural approximation is introduced. Finally, we conclude this section by establishing the connection between the SB SDE and the reverse SDE arising in score-based diffusion models. This connection shows that the proposed EnSBF is closely related to the EnSF.

\subsection{The Schr{\"o}dinger Bridge Problem}
The goal of the Schr{\"o}dinger Bridge (SB) problem is to find a probability law on a path space so that it has the prescribed marginal distributions at the starting time and terminal time and that it minimizes the loss with respect to the reference measure in terms of the relative entropy. More specifically, let $\Omega=C([0,1]; \bR^d)$ be the space of continuous functions on $[0,1]$ valued in $\bR^d$. Let $X=\lbrace X_t \rbrace_{t\in[0,1]}$ be a canonical process on $\Omega$ where $X_t(\omega)=\omega_t, \omega \in \Omega$. The $\sigma$-field defined on $\Omega$ is the canonical $\sigma-$field $\mcal{F}=\lbrace \omega: (\omega_t)_{t \in [0,1]} \in B \ | \ B \in \mcal{B}(\bR^d) \rbrace$. Let  $\mcal{P}(\Omega)$ denote the space of probability measures on the path space $\Omega$. Denoting $\mathbb{W}_{\sigma^2}^x$ as the law of BM starting from $x$ with variance $\sigma^2$, the law of reversible Brownian motion is defined via $\mathbb{Q}_{\sigma^2}=\int \mathbb{W}_{\sigma^2}^x dx$ so that its marginal is the Lebesgue measure at each time $t$. Note that $\mathbb{Q}_{\sigma^2}$ is an unbounded measure on $\Omega$. The original version of the Schr{\"o}dinger's problem is formulated as follows: 
\begin{Problem}[\textbf{Schr{\"o}dinger Bridge Problem}]
Let $\Omega:=C([0,1];\mathbb{R}^d)$ be the path space and let
$\mathbb{Q}_{\sigma^2}$ denote the reversible Brownian reference measure. Given two probability measures $\nu,\mu\in\mathcal{P}(\mathbb{R}^d)$, find
$\mathbb{P}^*\in\mathcal{P}(\Omega)$ such that
\begin{align}
    \mathbb{P}^*
    \in
    \arg\min_{\substack{\mathbb{P}\in\mathcal{P}(\Omega)\\
    \mathbb{P}_0=\nu,\ \mathbb{P}_1=\mu}}
    \mathcal{H}\bigl(\mathbb{P}\mid \mathbb{Q}_{\sigma^2}\bigr),
\end{align}
and the time marginal $\mathbb{P}_t$ is defined by the pushforward
$\mathbb{P}_t(B):=(X_t\#\mathbb{P})(B)=\mathbb{P}(X_t^{-1}(B))$, $B\in\mathcal{B}(\mathbb{R}^d),\quad t\in[0,1]$. 
where the relative entropy is defined by
\begin{align}
\mathcal{H}(\bbP | \mathbb{Q}_{\sigma^2})=
\begin{cases}
    \int \ln \frac{d\bbP}{d \mathbb{Q}_{\sigma^2}} d\bbP , &\ \ \text{if } \bbP << \mathbb{Q}_{\sigma^2} \\ 
    \infty , & \ \ \text{Otherwise.}
\end{cases}
\end{align}
\end{Problem}

Theorem 3.1 and 3.2 in \cite{sb8} show that the SB problem also has a control formulation.
\begin{Problem}\label{problem_control}(\textbf{Control Formulation}) Find the control
    \begin{align}
        \alpha^*_t(x) \in \arg \min_{\alpha \in \mcal{U}} \frac{1}{2 \sigma^2} \E[\int^1_0 ||\alpha_t||^2 dt]
    \end{align}
under the constraint: $\calL({X_T})= \mu$ where 
\begin{align}\label{sde_og_sim1}
	dX_t = \alpha_t dt + \sigma dW_t , \ \ X_0 \sim \nu. 
\end{align}
$\mcal{U}$ is the collection of all adapted process such that 
\begin{align}
	\E\bsbracket{\int^T_0 |\alpha_s|^2 ds} < \infty. 
\end{align}
\end{Problem}
Problem~\ref{problem_control} is particularly useful because it provides a systematic simulation procedure once the optimal control $\alpha_t^*$ is available. More precisely, let $\mu:=p_{\mathrm{data}}$ denote the target distribution. Starting from an initial particle cloud approximating the initial density $q$, one evolves the particles according to the controlled SDE \eqref{sde_og_sim1}. The resulting particle ensemble at terminal time $T=1$ then provides an approximation of the target distribution $\mu$.

One particular useful case is when $\nu=\delta_0$ the Dirac delta measure at $0$, and $\mu$ is the target measure. When the diffusion $\sigma$ is set to $1$, the optimal control takes an explicit form (See Appendix 5.1)
 \begin{align}
        \alpha_t(x)= \nabla_x \log \E_{Z\sim \mu_W}[\frac{\mu}{\mu_W}(x+ \sqrt{T-t} Z)] \label{scheme1_alpha}
    \end{align} and $\mu_W$ denotes the density of the standard Gaussian measure.
Therefore, to sample from the target distribution $\mu$, it suffices to start from the initial distribution $\delta_0$ and simulate the controlled dynamics
    \begin{align}\label{scheme_sde}
        dX_t = \alpha_t(X_t)dt + \sigma dW_t , && X_0 \sim \delta_0(x), && T=1. 
    \end{align}
This is the famous Schr{\"o}dinger-F{\"o}llmer sampler. We will discuss another similar design based on \eqref{og_design_seq} in Example 2.1.

\subsubsection{Algorithm based on the solution to the SB problem.}
In this section, we discuss our algorithmic design based on \eqref{scheme1_alpha}, due to its simplicity and its suitability for Monte Carlo-based simulation. After a change of measure, equation \eqref{scheme1_alpha} can be equivalently written as: 
\begin{align}
    \alpha_t(x) &= \nabla_x \log \E_{Z\sim \mu_W}[\frac{\mu}{\mu_W}(x+ \sqrt{T-t} Z)] \nonumber\\ 
    &= \nabla_x \log \E_{Z\sim \mu}[\exp(-\frac{1}{2(T-t)}|Z-x|^2+\frac{1}{2 T} |Z|^2)] \\ 
    &=\frac{\E_{Z \sim \mu}[(Z-x) \exp(-\frac{1}{2(T-t)}|Z-x|^2+\frac{1}{2 T} |Z|^2)]}{(T-t)\E_{Z \sim \mu}[\exp(-\frac{1}{2(T-t)}|Z-x|^2+\frac{1}{2 T} |Z|^2)]}. \label{exact_drift}
\end{align}
The representation in \eqref{exact_drift} involves expectations with respect to the data measure $\mu$. This form is particularly useful in simulation when the target distribution is available only through samples, since the expectations can then be approximated directly by empirical averages over the particles.

On the other hand, the original formulation \eqref{scheme1_alpha} expresses the same drift through an expectation with respect to the standard Gaussian reference measure. This alternative representation is useful when the target measure $\mu$ is absolutely continuous with respect to the Lebesgue measure and its density is known explicitly. In that case, the likelihood ratio $\mu/\mu_W$ can be evaluated analytically, and the Gaussian expectation can be estimated by standard Monte Carlo sampling. We refer to \cite{jiao1} for a more detailed discussion.

To generate samples approximately distributed according to the target measure $\mu$, we discretize the diffusion process \eqref{scheme_sde} and approximate the expectations in \eqref{exact_drift} by Monte Carlo. More precisely, given an ensemble $\{Z^i\}_{i=1}^M$ of samples from $\mu$, we define
\begin{align}
    \alpha(t,x)
    \approx \frac{ \sum^{M}_{i=1} (Z^i-x) \exp(-\frac{1}{2(T-t)}|Z^i-x|^2+\frac{1}{2 T} |Z^i|^2)}{(T-t)\sum^{M}_{i=1}  \exp(-\frac{1}{2(T-t)}|Z^i-x|^2+\frac{1}{2 T} |Z^i|^2)}:=\alpha^M(t,x). \label{approx_drift_1} 
\end{align}
Here, $M$ denotes the size of the data ensemble.

The corresponding Euler--Maruyama approximation of \eqref{scheme_sde} is then
given by
\begin{align}
    X^{M,N}_{t_{n+1}}
    =
    X^{M,N}_{t_n}
    + \alpha^M\left(t_n,X^{M,N}_{t_n}\right)\Delta t
    + \Delta W_{t_n},
    \qquad
    X^{M,N}_{t_0}=0,
    \label{em_scheme}
\end{align}
where $\Delta W_{t_n}:=W_{t_{n+1}}-W_{t_n}\sim \mathcal N(0,\Delta t\,I_d)$.
 
\begin{algorithm}
\caption{Algorithm for producing generative samples based on \sbb solution.}\label{algorithm_drift}
\begin{algorithmic}[1]
\REQUIRE 
Initializing the following 
\begin{itemize}
    \item The data collection $\lbrace Z_i \rbrace_{i=1,...,M}$ which forms an empirical measure $\frac{1}{M} \sum^M_{i=1} \delta_{Z_i}$.
    \item The initial ensemble collection $\lbrace X^{M,N,\iota}_{t_0} \rbrace_{\iota=1,...,B}$ which all start from 0.
    \item Total number of temporal discretization $N$, with terminal time $T=1$. 
\end{itemize}
\FOR{$n=0,1,...,N-1$, $\iota \in \lbrace 1,2,...,B \rbrace$}
    \STATE{ 
For each particle, update its position based on the following dynamics: 
            \begin{align}\label{alg_1_sde}
	X^{M,N,\iota}_{t_{n+1}} =X^{M,N,\iota}_{t_{n}}+ \alpha^M(t_n,X^{M,N,\iota}_{t_{n}}) \Delta t + \Delta W_{t_n}, \ \ X^{M,N,\iota}_{t_0}=0.
      \end{align} 
    }
\ENDFOR
\RETURN
The collection of generated particles $\lbrace X^{M,N,\iota}_{1} \rbrace_{\iota=1,...,B}$.
\end{algorithmic}
\end{algorithm}
We note that a similar algorithm can be found in the literature \cite{pham1, sb10}. We provide two numerical below to demonstrate that such an algorithm performs well on dataset with complex distribution in high dimensions. 

The SDE \eqref{scheme_sde} can be viewed as inducing a sampling operator
\[
    A:\mathcal P(\bRd)\rightarrow \mathcal P(\bRd),
\]
which maps the prescribed target distribution $\mu$ to itself, in the sense that the terminal law of the solution to \eqref{scheme_sde} is exactly $\mu$. That is, $A\mu=\mu$.

In a similar manner, the numerical scheme \eqref{alg_1_sde} induces an empirical
operator $A^B:\mathcal P(\bR^d)\rightarrow \mathcal P(\bR^d),$  
defined by
\begin{align}
    A^B\mu =
    \frac{1}{B}\sum_{\iota=1}^{B} \delta_{X^{M,N,\iota}_1}, \label{empirical_operator}
\end{align}
where $X^{M,N,\iota}_1$ denotes the terminal state of the $\iota$-th independent simulation of \eqref{alg_1_sde}, and $B$ is the total number of simulated terminal particles.

In other words, the operator $A^B$ first uses samples from the target measure $\mu$ to construct the Monte Carlo approximation of the drift, then propagates particles through the time-discretized SDE, and finally returns the empirical measure of the generated terminal states. Thus, $A^B\mu$ serves as a particle approximation of the target distribution $\mu$.

To illustrate that the Schr{\"o}dinger Bridge data generator is able to produce new samples that are statistically close to the original dataset, we implement the discretized SDE \eqref{alg_1_sde} as described in Algorithm \ref{algorithm_drift}. The target distribution is then approximated by the empirical distribution of the terminal particles.

As a first example, we consider the two-dimensional Moons dataset generated by the \texttt{make\_moons} routine from the \texttt{sklearn} library. In this experiment, the original dataset consists of $1000$ data points, which are shown as orange dots in Figure \ref{fig:moons}. The artificial samples generated by the proposed Schr{\"o}dinger Bridge data generator are shown as blue dots. The generated ensemble has size $400$, and the discretized diffusion process is
simulated with $N=1024$ time steps. This preliminary example demonstrates that the proposed generator can successfully recover the main geometric structure of the target distribution.

As a second example, we consider the standard handwritten digits dataset. The full dataset contains $10000$ samples of handwritten digits, where each sample
is represented as an image of size $1\times 28\times 28$. This example is used to examine the performance of the proposed method in a higher-dimensional
setting. In our experiment, artificial digit samples are generated using only $100$ input samples. The results in Figure \ref{fig:handwritten} suggest that Algorithm \ref{algorithm_drift} can still produce visually reasonable samples even when only a small number of input data points is available.

\begin{figure}[htbp]
    \centering
    \begin{subfigure}[b]{0.35\textwidth}
    \centering
         \includegraphics[width=\textwidth]{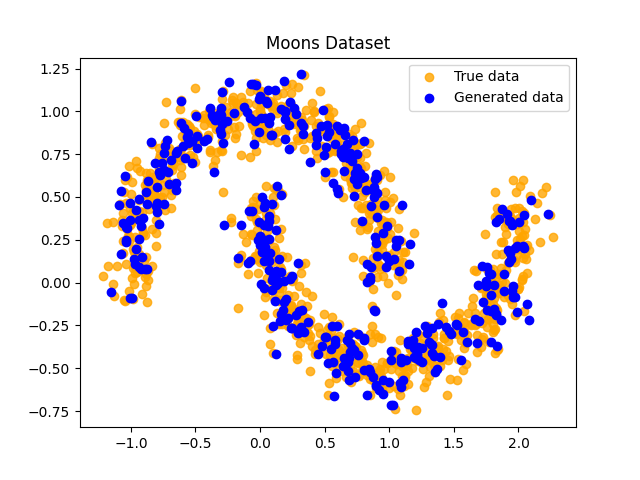} 
        \caption{Moons dataset.Total time $T=1.0$, and $N=1024$ total steps.} 
    \label{fig:moons}
    \end{subfigure}
    \vspace{1cm}
    \begin{subfigure}[b]{0.55\textwidth}
 \includegraphics[width=\textwidth]{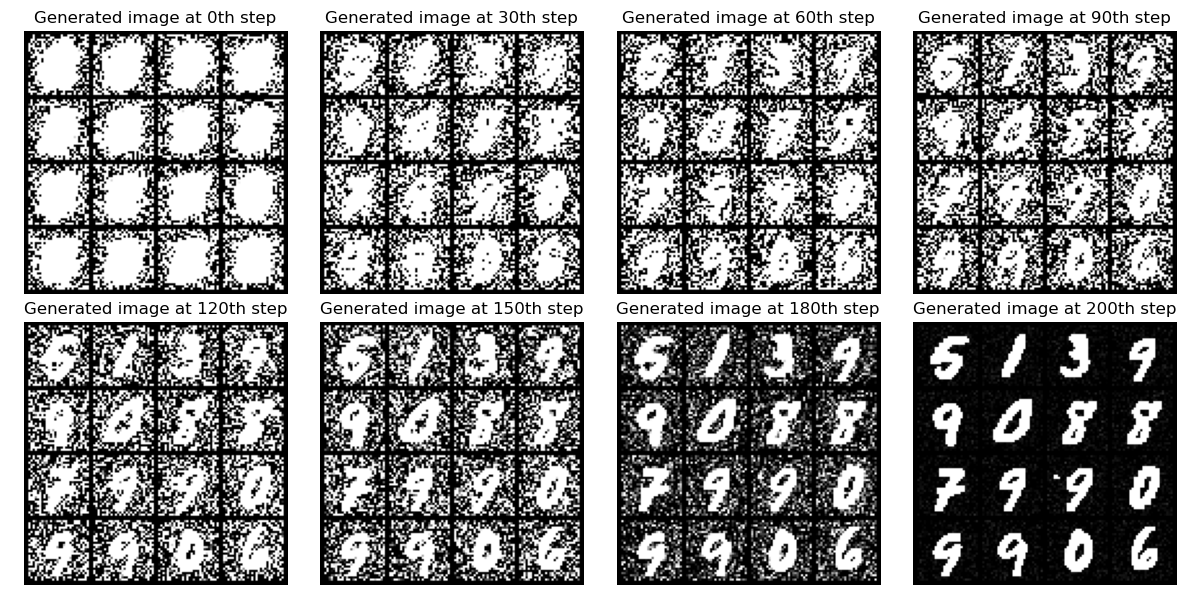} 
        \caption{Handwritten digit generation process using algorithm \ref{algorithm_drift}}
        \label{fig:handwritten}
    \end{subfigure}
    \caption{Demonstration for Algorithm \ref{algorithm_drift}. Two numerical examples are presented to show that the algorithm can capture the complicated structure of the underlying distribution (data) by producing samples which are statistically close to the original ones.}
    \label{fig:moon_hand}
\end{figure}

\subsection{Connecting the Schr{\"o}dinger Bridge problem and Score-based diffusion models}
Now we establish the connection between the solution of Schr{\"o}dinger bridge problem with score-based diffusion models. Throughout this subsection, we focus on the one-dimensional scalar setting for simplicity.  The extension to general scalar time-inhomogeneous coefficients follows from similar arguments.

Recall that for the score-based diffusion model \cite{tang1}, the forward and backward processes are as follows: 
\begin{align}
     d X_t &= \tilde{b}(t,X_t) dt + \tilde{\sigma}(t,X_t) dW_t, && X_0 \sim p_{data}, \ \ X_T \sim p(T,x) \\ 
     d Y_t & = (-\tilde{b}(T-t, Y_t) + \frac{\nabla \cdot (p(T-t,Y_t) \tilde{a}(T-t,Y_t))}{p(T-t,Y_t)}) dt + \tilde{\sigma}_{T-t} dW_t, && Y_0 \sim p(T,x), Y_T \sim p_{data}(x)
\end{align}
where $\tilde{a}:= \tilde{\sigma} \tilde{\sigma}^T$.

For analytic tractability, we consider the following linear SDE \eqref{reference_sde2_1} with the corresponding reverse SDE \eqref{reference_sde2_2} with $T=1$.  
\begin{align}
     d X_t &= \tilde{b}(t)X_t dt + \tilde{\sigma}(t) dW_t, && X_0 \sim p_{data}, \ \ X_1 \sim p(1,x) \label{reference_sde2_1} \\ 
     d Y_t &= \Big(- \tilde{b}(1-t)Y_t + \tilde{a}_{1-t} \nabla \log p(1-t,Y_t) \Big) dt + \tilde{\sigma}_{1-t} dW_t  , && Y_0 \sim p(1,x), \ \ Y_1 \sim p_{data}(x) \label{reference_sde2_2}
\end{align}
where both $\tilde{b}(t),\tilde{\sigma}(t)$ are time-dependent deterministic functions,  $p(1,x)= \int \tilde{q}(0,y,1,x)p_{data}(y)dy$. The solution of \eqref{reference_sde2_1} given $X_{t_0}=x$ is the following: 
\begin{align}
    X_t=e^{\int^t_{t_0} \tilde{b}_s ds} x + e^{\int^t_{t_0} \tilde{b}_s ds} \int^t_{t_0} e^{-\int^s_{t_0} \tilde{b}_r dr} \tilde{\sigma}_s dW_s
\end{align}
The transition density is given as 
\begin{align}
    \tilde{q}(t_0,x,t,y) = \frac{1}{\sqrt{2 \pi \tilde{\sigma}^2_{t_0,t}}} \exp \Big( - \frac{(y-\tilde{\mu}_{t_0, t}x)^2 }{2 \tilde{\sigma}^2_{t_0,t}} \Big)
\end{align}
where $\tilde{\mu}_{t_0, t}=e^{\int^t_{t_0} \tilde{b}_s ds}$ and $\tilde{\sigma}^2_{t_0,t}=e^{2 \int^t_{t_0} \tilde{b}_s ds} \int^t_{t_0} e^{-2\int^s_{t_0} \tilde{b}_r dr}\tilde{\sigma}^2_s ds = \int^t_{t_0} e^{2\int^t_{s} \tilde{b}_r dr}\tilde{\sigma}^2_s ds$.

\begin{theorem}
\label{thm:sb_score_connection}
Let $p(t,\cdot)$ denote the density of the solution to
\eqref{reference_sde2_1} with $X_0\sim p_{\mathrm{data}}$. Define
\[
    b_t:=-\tilde b_{1-t},
    \qquad
    \sigma_t:=\tilde\sigma_{1-t},
    \qquad
    a_t:=\sigma_t^2,
\]
and let $q(t,x,1,y)$ be the transition density of the reference SDE
\[
     dX_s=b_sX_s\,ds+\sigma_s\,dW_s,
     \qquad X_t=x.
\]
Then the optimal drift correction $\alpha^*$ in the stochastic control formulation of the Schr{\"o}dinger bridge problem satisfies
\begin{align}
    \alpha^*(t,x)
    =
    a_t\nabla_x\log p(1-t,x).
\end{align}
That is, the optimal Schr{\"o}dinger bridge control is the diffusion-scaled score of the forward diffusion model evaluated at time $1-t$.
\end{theorem}

\begin{proof}
Define: 
\begin{align*}
    \tilde{\mu}_{0,1-t} :=
    \exp\left(
        \int_0^{1-t}\tilde b_s\,ds
    \right), \qquad \nonumber \\ 
    \tilde{\sigma}^2_{0,1-t}
    :=
    \int_0^{1-t} \exp\left( 2\int_s^{1-t}\tilde b_r\,dr \right)
    \tilde{\sigma}_s^2\,ds .
\end{align*}
Consider the score function of $X_t$ which is the solution to \eqref{reference_sde2_1},
\begin{align}
    \nabla \log p(1-t,y)&=\nabla \log \int \tilde{q}(0,x,1-t,y) p_{data}(x) dx \nonumber\\ 
    &=\nabla \log \frac{1}{\sqrt{2 \pi \tilde{\sigma}^2_{0,1-t}}} \int \exp( -\frac{|y-\tilde{\mu}_{0,1-t} x|^2}{2 \tilde{\sigma}^2_{0,1-t}}) p_{data}(x)dx \nonumber\\ 
    &= \nabla \log \frac{1}{\sqrt{2 \pi \tilde{\sigma}^2_{0,1-t}}}\int \exp \Big( -\frac{|\tilde{\mu}^{-1}_{0,1-t}y- x|^2}{2 (\tilde{\sigma}_{0,1-t}/\tilde{\mu}_{0,1-t})^2 } \Big) p_{data}(x)dx \label{trans_2a}
\end{align}
Note that by a change of variable, we have:
\begin{align}
    \tilde{\mu}^{-1}_{0,1-t} & = \exp (-\int^{1-t}_{0} \tilde{b}_s ds) = \exp (\int^{1}_{t} -\tilde{b}_{1-r} dr) \nonumber \\ 
    \tilde{\sigma}^2_{0,1-t} & = \int^1_t e^{2 \int^t_s - \tilde{b}_{1-h}dh} \tilde{\sigma}^2_{1-s}ds\\
    \tilde{\sigma}^2_{0,1-t}/\tilde{\mu}^2_{0,1-t} &= \int^1_{t} e^{2\int^{1}_{s} -\tilde{b}_{1-r} dr} \tilde{\sigma}^2_{1-s} ds.
\end{align}
Now define the time-reversed reference coefficients by 
\begin{align}
    b_t & = -\tilde{b}_{1-t} , \ \ \ \sigma_t = \tilde{\sigma}_{1-t}.
\end{align}
We consider the reference SDE and denote its transition density kernel from time $t$ to $1$ by $q(t,x,1,y)$: 
\begin{align}\label{time_reversed_reference_sde}
    dX_s = b_s X_s ds + \sigma_s dW_s, && X_t=x. 
\end{align}
Then for the reference SDE \eqref{time_reversed_reference_sde} the transition
multiplier and transition variance from time $t$ to time $1$ are given by
\begin{align}
    \mu_{t,1}&=\exp (\int^{1}_{t} b_r dr) = \tilde{\mu}^{-1}_{0,1-t} \\ 
    \sigma^2_{t,1} & = \int^1_{t} e^{2\int^{1}_{s} b_r dr} \sigma^2_s ds =\tilde{\sigma}^2_{0,1-t}/\tilde{\mu}^2_{0,1-t}
\end{align}
As a result, we have the following sequence of equalities: 
\begin{align}
    \eqref{trans_2a} &= \nabla \log \int \frac{1}{\sqrt{2 \pi \sigma^2_{t,1} \mu^{-2}_{t,1}}} \exp \Big( -\frac{|\mu_{t,1}y- x|^2}{2 \sigma^2_{t,1} } \Big) p_{data}(x)dx \\
    & =  \nabla \log \int q(t,y,1,x) p_{data}(x) \mu_{t,1} dx \label{final_12}\\
    &= \nabla \log  \int q(t,y,1,x) p_{data}(x) dx  \label{final_2}
\end{align}
where the term $\mu_{t,1}$ is dropped inside the gradient operator since it only depends on time. 


Next, note that the SBP with $\mathbb{Q}^x$ induced by \eqref{time_reversed_reference_sde}, and the initial density $\rho_\nu(x)= \int q(0,x,1,y)p_{data}(y) e^{I_{0,1}}dy$ and the target density $p_{data}(y)$, \eqref{system_21} and \eqref{system_22} are satisfied with $f_0(x)=1$ and $g_1(y)=p_{data}(y) e^{I_{0,1}}$, where $I_{t_0, t}:= \int^t_{t_0} b_s ds$.
Indeed,
\begin{align}
    f_0(x)
    \int q(0,x,1,y)g_1(y)\,dy &=
    \int q(0,x,1,y)p_{\mathrm{data}}(y)e^{I_{0,1}}\,dy = \rho_\nu(x),
\end{align}
and, since $\int q(0,x,1,y)\,dx=e^{-I_{0,1}}$
we also have
\begin{align}
    g_1(y)
    \int q(0,x,1,y)f_0(x)\,dx &=
    p_{\mathrm{data}}(y)e^{I_{0,1}}e^{-I_{0,1}} = p_{\mathrm{data}}(y) = \rho_\mu(y).
\end{align}
Hence by Theorem 3.2 in \cite{sb8}, we have 
\begin{align}
    \alpha^*(t,x)&:= a_{t}\nabla \log \int q(t,x,1,y) p_{data}(y) e^{I_{0,1}} dy  \\ 
    &=a_{t} \nabla \log \int q(t,x,1,y) p_{data}(y) dy \label{final_3}\\
    &=a_{t} \nabla \log \int \tilde{q}(0,y,1-t,x) p_{data}(y) dy 
\end{align}
solves the corresponding optimal control problem and, in particular 
\begin{align}
    dX_t = \Big( b_t X_t + \alpha^*(t,X_t) \Big) dt + \sigma_t dW_t, && X_0 \sim \nu , \ \ X_1 \sim \mu 
\end{align}
with $\rho_{\nu}=\int q(0,x,1,y)p_{data}(y) e^{I_{0,1}}dy=\int \tilde{q}(0,y,1,x)p_{data}(y) dy , \rho_{\mu}(x)=p_{data}(x)$. 
\end{proof}

\begin{example}
We first consider an example in the original SBP setup, where we have $b_t=0$ and $\sigma_t = \sigma$ a positive constant. In applications, the empirical data distribution may be sparse or contain sharp spikes. A common strategy is therefore to first learn a smoothed, or regularized, version of the data distribution by adding Gaussian noise, and then recover the original data distribution through a denoising procedure. This provides part of the motivation
for \cite{ys4}. As such, we first take the target measure to be $p_{\sigma, data}:= \int p_{data}(y) \phi_{\sigma}(x-y)dy$ and will recover the target measure $p_{data}$ from the same. Hence we formulate the following two-step procedure: 
    \begin{align}
        dX_t &= \sigma^2 \nabla \log \E^{Z \sim \phi_\sigma}[\frac{p_{\sigma, data}}{\phi_\sigma}(X_t + \sqrt{1-t}Z)] dt + \sigma dW_t, && X_0=0 \label{scheme2_p1} \\ 
        dX_t &= \sigma^2 \nabla \log p_{\sqrt{1-t}\sigma, data}(X_t) dt + \sigma dW_t, && X_0 \sim p_{\sigma, data} \label{scheme2_p2}
    \end{align}
That is \eqref{scheme2_p1} first generates samples from the smoothed
distribution $p_{\sigma,\mathrm{data}}$ using a Schr{\"o}dinger--F{\"o}llmer-type SDE. Then in \eqref{scheme2_p2}, starting from $X_0 \sim p_{\sigma, data}$, samples are recovered by the reverse denoising dynamics. 

    Equation \eqref{scheme2_p1} can be obtained in a similar fashion as \eqref{part1_sys} by setting $\mu:=p_{\sigma, data}$ in the first step. Equation \eqref{scheme2_p2} is then obtained by setting $f_0(x)=1$ so that $g_1(y)=p_{data}(y)$. And one can check that $q(x)=\int p_{data}(y) h_{\sigma}(0,x,1,y) dy=p_{\sigma, data}$ is the smoothed data distribution. Note that \eqref{scheme2_p2} has recovered the reverse denoising process from the score-based diffusion model. 
\end{example}

\begin{example}
\textbf{(Recovering the set up for score-based diffusion models)}
    As a special case, in \cite{feng1}, the forward processes are taken to be 
    \begin{align}
        \tilde{b}_t & = \frac{d\log \alpha_t}{dt} \label{ensf_eq1} \\ 
        \tilde{\sigma}^2_t&=\frac{d \beta^2_t}{dt} - 2 \frac{d\log \alpha_t}{dt} \beta^2_t, && \alpha_t =1-t, \ \ \beta^2_t=t. \label{ensf_eq2}
    \end{align}
    where the benefit is that the initial data distribution will be transformed into pure noise (Gaussian distribution) at $T=1$. 
The forward SDE takes the form:
    \begin{align}
        dX_t = \tilde{b}_t X_t dt + \tilde{\sigma}_t dW_t, && \rho_{X_0} = p_{data} \label{ensf_sde}
    \end{align}
so the forward process satisfies
    \begin{align}
    X_t & =\alpha_t X_0+\beta_t Z \qquad Z\sim \mathcal N(0,1).
    \end{align}
    Therefore the initial data distribution is formally transported to
standard Gaussian noise at $T=1$. In this case,
\[
    \tilde b_t=-\frac{1}{1-t},
    \qquad
    \tilde\sigma_t^2=\frac{1+t}{1-t},
\]
so there is singularity near the terminal time $T=1$. This explains why in their implementation, the forward simulation is stopped at $1-\epsilon$ for some small $\epsilon >0$. 

Following the constructions above, for the Schr{\"o}dinger Bridge problem, the reference SDE is given by 
    \begin{align}
        d X_t = -\tilde{b}_{1-t} X_t dt + \tilde{\sigma}_{1-t}dW_t, && X_0=x.
    \end{align}
 And the controlled SDE with $\alpha^*(t,x)=\tilde{a}_{1-t}\nabla \log \int q(t,x,1,y) g_1(y) dy$ takes the form: 
    \begin{align}
        d X_t = \Big( -\tilde{b}_{1-t} X_t + \tilde{a}_{1-t}\nabla \log \int q(t,X_t,1,y) g_1(y) dy  \Big) dt + \tilde{\sigma}_{1-t}dW_t \nonumber.
    \end{align}
Taking $g_1(y)=p_{data}(y) e^{\int^1_0 -\tilde{b}_{1-t} dt}$, the exponential term in the SDE drops out due to the gradient operator. Hence: 
\begin{align}
    dX_t =
    \left(-\tilde b_{1-t}X_t+\tilde a_{1-t}\nabla_x \log \tilde p(1-t,X_t)
    \right)dt+\tilde\sigma_{1-t}\,dW_t.
\end{align}
Using $\Phi(x)$ to denote the standard Gaussian density, we also have $X_0 \sim \int q(0,x,1,y)p_{data}(y) e^{\int^1_0 -\tilde{b}_{1-t} dt }dy=\int \tilde{q}(0,y,1,x)p_{data}(y) dy=\Phi(x)$ and $X_T \sim p_{data}(y)$. Note that on the truncated interval $[0,1-\varepsilon]$, the corresponding factor  $e^{\int^{1-\epsilon}_0 -\tilde{b}_{1-t-\epsilon}dt}$ is finite. The corresponding reverse process starts from $p(1-\varepsilon,\cdot)$ and terminates at $p_{data}$. Formally taking $\varepsilon$ to 0 recovers the standard Gaussian terminal law of the forward process.
    Note that this is precisely the backward process for \eqref{ensf_sde}: 
    \begin{align}
         d X_t = \Big( -\tilde{b}_{1-t} X_t + \tilde{a}_{1-t} \nabla \log \tilde{p}(1-t,X_t)  \Big) dt + \tilde{\sigma}_{1-t}dW_t \label{main_obs}
    \end{align}
    with $X_0 \sim \Phi(x), X_T \sim p_{data}(x)$. 
\end{example}

As a consequence of the preceding discussion, under the linear scalar reference setting considered above, the reverse-time process of the score-based diffusion model can be identified with the controlled SDE associated with the solution of a Schr{\"o}dinger bridge problem under the stochastic optimal control formulation. At the same time, the Schr{\"o}dinger bridge
framework allows for a broader class of distribution generation procedures; see,
for example, \eqref{scheme1_alpha} and Example 2.1. Therefore, the score-based diffusion model may be viewed as a special case of the controlled Schr{\"o}dinger bridge dynamics. This observation provides a mathematical link between the EnSBF and EnSF methodologies and motivates the possibility of switching between the two approaches under different dimensional
regimes.

\section{Filtering problem, discrete setting}
In this section, we introduce a nonlinear filter which we call the Ensemble Schr{\"o}dinger Bridge filter based on the training-free Schr{\"o}dinger bridge data generator developed in the previous section.  Before presentation of the new algorithm, we briefly review the filtering problem and the classical particle method which serves as our main benchmark.

We consider a stochastic dynamical system of the following form:  
\begin{align} 
	 X_{j+1} &= f(X_j, \omega_j), && \textbf{Signal} \label{sb_pred1} \\
	Y_{j+1} &=g(X_{j+1}) + \epsilon_{j+1} \label{sb_analysis1}, && \textbf{Observation}
\end{align}
where $X \in \bR^d, Y \in \bR^n, \epsilon_j \in \bR^n,  \omega \in \bR^m$ and the noise terms $\omega, \epsilon$ are assumed to be independent Gaussian random variables. The goal of the filtering problem is to estimate $X_{j+1}$ given the observations $\mcal{Y}_{j+1}:=\cbrace{Y_l}_{l=0}^{j+1}$ which corresponds to observations of a single realized path. Mathematically, one endeavors to find the filtering density
\begin{align}
	\mathbb{P}(X_{j+1} |\mcal{Y}_{j+1}), && \forall j \in \lbrace 0, 1, ..., J-1 \rbrace 
\end{align}
where $J$ denotes the total number of filtering steps. 
A standard approach is the Bayesian filtering framework, in which one sequentially iterates between the following two steps. 
\begin{enumerate}[\textbullet]
    \item \textbf{Prediction step}.
    Given the posterior $\bbP(X_j|\mcal{Y}_j)$ at step $j$ and the transition kernel  $\bbP(X_{j+1}|X_j)$, the prior distribution at the $j+1$-th step is obtained based on the Chapman-Kolmogorov formula:
    \begin{align}\label{prediction_og}
        \bbP(X_{j+1} | \mcal{Y}_{j})=\int \bbP(X_{j+1}|X_j) \bbP(X_j|\mcal{Y}_{j}) d X_j.
    \end{align}
We denote this prior distribution by $\tilde{\mu}_{j+1}=\bbP(X_{j+1}|Y_{j})$. 
    \item \textbf{Update  step}.
    To obtain posterior density at step $j+1$, one combines the likelihood function  $\bbP(Y_{j+1}|X_{j+1})$ with the prior
    \begin{align}\label{update_og}
        \bbP(X_{j+1}|\mcal{Y}_{j+1}) \propto  \bbP(Y_{j+1}|X_{j+1})\bbP(X_{j+1}|\mcal{Y}_{j})
    \end{align}
This step is based on the Bayesian rule \cite{data_ass}, and 
\begin{align}
    \bbP(Y_{j+1}|X_{j+1}) \propto \exp \bsparath{-\frac{1}{2}(g(X_{j+1})-Y_{j+1})^T\Sigma^{-1} (g(X_{j+1})-Y_{j+1}))}
\end{align}
where $\Sigma = Cov(\epsilon)$ is the covariance matrix of the noise in \eqref{sb_analysis1}. 

We note that this step corresponds to a map: 
   $G_{j}: \mcal{P}(\bR^d) \rightarrow \mcal{P}(\bR^d)$ such that
\begin{align}\label{exact_sb_analysis}
	G_{j} \tilde{\mu}_{j+1}(dx) = \frac{\tilde{g}_j \tilde{\mu}_{j+1} (dx)}{\int_{\bR^d} \tilde{g}_j(x)\tilde{\mu}_{j+1} (dx)}:=\mu_{j+1}(dx)
\end{align}
where ${\mu}_{j+1}=\bbP(X_{j+1}|\mcal{Y}_{j+1})$ is the posterior distribution, and $\tilde{g}_j(X_{j+1}) \propto \bbP(Y_{j+1}|X_{j+1})$ is the likelihood associated with the new observation $Y_{j+1}$.
\end{enumerate}

\subsection{Design of the \sbb nonlinear filter}
To design the \sbb nonlinear filter, similar to the particle filter approach for the prediction step, we update the particle location based on dynamics \eqref{sb_pred1} to obtain the ensemble $ \lbrace \tilde{X}_{j+1} \rbrace$. And this ensemble provides an empirical approximation of the prior filtering distribution $\bP(X_{j+1}|\mcal{Y}_j) \approx \frac{1}{B}\sum^B_{i=1} \delta_{\tilde{X}^i_{j+1}}$.

For the analysis step, the target distribution is the posterior distribution $\mu_{j+1}$ defined in \eqref{exact_sb_analysis}. We generate samples approximately distributed according to this posterior using the \sbb data generator described in Algorithm \ref{algorithm_drift}.

To highlight the fact that the target distribution is $\mu_{j+1}$ at the $j$th filtering step, we write $\alpha(t,x,\mu_{j+1})$ to denote such dependence.  Using \eqref{exact_drift} the drift term can be written as follows: 
\begin{align}
    \alpha(t,x,\mu_{j+1})&= \frac{\int (z-x) \exp(-\frac{1}{2(T-t)}|z-x|^2+\frac{1}{2 T} |z|^2) d\mu_{j+1}(z)}{(T-t)\int \exp(-\frac{1}{2(T-t)}|z-x|^2+\frac{1}{2 T} |z|^2) d\mu_{j+1}(z)} \nonumber \\ 
    &= \frac{\int (z-x) \exp(-\frac{1}{2(T-t)}|z-x|^2+\frac{1}{2 T} |z|^2)\tilde{g}_j(z) d\tilde{\mu}_{j+1}(z)}{(T-t)\int \exp(-\frac{1}{2(T-t)}|z-x|^2+\frac{1}{2 T} |z|^2)\tilde{g}_j(z) d\tilde{\mu}_{j+1}(z)} \nonumber\\
    &= \frac{\E_{Z \sim \tilde{\mu}_{j+1}}[(Z-x) \exp(-\frac{1}{2(T-t)}|Z-x|^2+\frac{1}{2 T} |Z|^2)\tilde{g}_j(Z)]}{(T-t)\E_{Z \sim \tilde{\mu}_{j+1}}[\exp(-\frac{1}{2(T-t)}|Z-x|^2+\frac{1}{2 T} |Z|^2)\tilde{g}_j(Z)]} =: \tilde{\alpha}(t,x,\tilde{\mu}_{j+1}) \label{exact_drift_gtilde},
\end{align}
where in the second equality above, the normalizing constant $\int_{\bR^d} \tilde{g}_j(z)\tilde{\mu}_{j+1} (dz)$ was canceled. Using $\tilde{\alpha}$ as the drift, one obtains a map $\tilde{A}_j : \mcal{P}(\bR^d) \rightarrow \mcal{P}(\bR^d)$ such that $\tilde{A}_j \tilde{\mu}_{j+1}=\mu_{j+1}$.

Since $\tilde{\mu}_{j+1}$ is not known analytically, we approximate the
expectations in \eqref{exact_drift_gtilde} using samples from $\tilde{\mu}_{j+1}$.  More precisely, given predicted particles $\lbrace Z^i \rbrace^B_{i=1}$ $Z^i \sim \tilde{\mu}_{j+1}$, we use the empirical approximation: 
\begin{align}\label{app_r}
	\tilde{\alpha}(t,x,\tilde{\mu}_{j+1})\approx\frac{ \sum^{B}_{i=1} (Z^i-x) \tilde{g}_j(Z^i) \exp(-\frac{1}{2(T-t)}|Z^i-x|^2+\frac{1}{2 T} |Z^i|^2)}{(T-t)\sum^{B}_{i=1} \tilde{g}_j(Z^i) \exp(-\frac{1}{2(T-t)}|Z^i-x|^2+\frac{1}{2 T} |Z^i|^2)}.
\end{align}

Based on the above discussion, we can approximate target distribution by its empirical distribution with $M$ particles. This defines an empirical approximation $\tilde{A}^{N,M}_j: \mcal{P} (\bR^d) \rightarrow \mcal{P}(\bR^d)$ for the map \eqref{exact_sb_analysis} with $\tilde{A}^{N,M}_j \tilde{\mu}_{j+1}=\frac{1}{M}\sum^M_{\iota=1} \delta_{V^{N,\iota}_{T}}:=\frac{1}{M}\sum^M_{\iota=1} \delta_{X^\iota_{j+1}}$. Here $V^{N,\iota}_{T}$ is the Euler approximation of the solution of the SDE \eqref{scheme_sde}: 
\begin{align}
	V^{N,\iota}_{\tau_{l+1}} =V^{N,\iota}_{\tau_{l}}+ \tilde{\alpha}(\tau_l,V^{N,\iota}_{\tau_{l}},\tilde{\mu}^B_{j+1}) \Delta \tau + \Delta W_{\tau_l}, \ \ V^{N,\iota}_{\tau_0}=0, \ \iota \in \lbrace 1,2,...,M \rbrace. \label{discrete_drift_1}
\end{align}
with $T=1$, and $\Delta \tau := T/N$. Note that in \eqref{discrete_drift_1}, $V^N_{\tau_{l}}$ is purely an auxiliary process used only for the analysis step. $\tilde{\mu}^B_{j+1}$ is used for approximation of $\tilde{\mu}_{j+1}$ where the former stands for the ensemble particle approximation of the latter. 

Based on the preceding discussion, we summarize the proposed nonlinear filtering procedure in Algorithm \ref{algorithm_EnSBF}. We refer to this method as the Ensemble Schr{\"o}dinger Bridge Filter. 
\begin{algorithm}
\caption{Algorithm for the ensemble \sbb filter (EnSBF) }\label{algorithm_EnSBF}
\begin{algorithmic}[1]
\REQUIRE 
Initializing the following terms
\begin{itemize}
    \item The model: $f, g$ as in \eqref{sb_pred1} and \eqref{sb_analysis1}. The initial density $\mu_0 := \bbP(X_{0}|Y_{0}):=\frac{1}{B} \sum^B_{i=1} \delta_{X^i_{0}}$ which is a particle ensemble.
    \item Total number of filtering steps $J$. The time horizon $T=1$, total number of temporal discretization $N$ in the Euler scheme for the \sbb data generator. $\Delta \tau := \frac{T}{N}$.
\end{itemize}
\FOR{$j=0,1,2,..., J-1$}
    \STATE{ 
    \begin{itemize}
       \item Obtain $\bbP(X_{{j+1}}|\mcal{Y}_{j})=\frac{1}{B} \sum^B_{i=1} \delta_{\tilde{X}^i_{{j+1}}} $ based on $\bbP(X_{j}|\mcal{Y}_{j})=\frac{1}{B} \sum^B_{i=1} \delta_{X^i_{j}}$ using equation \eqref{sb_pred1}. 
       \item Update $\bbP(X_{{j+1}}|{\mcal{Y}_j})$ to $\bbP(X_{{j+1}}|\mcal{Y}_{{j+1}})$ by generating the particle ensembles $\frac{1}{B} \sum^B_{i=1} \delta_{X^i_{{j+1}}}$ via the Schr{\"o}dinger Bridge map $\tilde{A}^{N,B}_j: \mcal{P}(\bR^d) \rightarrow \mcal{P}(\bR^d)$: 
       \begin{align}
       	\tilde{A}^{N,B}_j \bbP(X_{{j+1}}|\mcal{Y}_{j}) \approx \bbP(X_{{j+1}}|\mcal{Y}_{{j+1}})
       \end{align} 
       which is realized by collecting the terminal state $V^{N,\iota}_T$ of the solution of the following discrete SDE: for $l=0,...,N-1$
       \begin{align}
	V^{N,\iota}_{\tau_{l+1}} =V^{N,\iota}_{\tau_{l}}+ \tilde{\alpha}(\tau_l,V^{N,\iota}_{\tau_{l}},\frac{1}{B} \sum^B_{i=1} \delta_{\tilde{X}^i_{{j+1}}}) \Delta \tau + \Delta W^\iota_{\tau_l}, \ \ V^{N,\iota}_{\tau_0}=0, \ \iota \in \lbrace 1,2,...,B \rbrace.
      \end{align} 
      where the function $\tilde{\alpha}$ is defined in \eqref{app_r}. 
      Then $\tilde{A}^{N,B}_j\bbP(X_{{j+1}}|\mcal{Y}_{j}) =\frac{1}{B} \sum^B_{\iota=1} \delta_{V^{N,\iota}_{T}}$.
    \end{itemize}}
\ENDFOR
\end{algorithmic}
\end{algorithm}
We make a few remarks and implementation notes concerning Algorithm \ref{algorithm_EnSBF}:
\begin{enumerate}
    \item The proposed algorithm is built on the Schr{\"o}dinger bridge generative model discussed in Section 2, which is training-free and derivative-free. The main computational cost comes from propagating the auxiliary particles through the discretized dynamics \eqref{discrete_drift_1} and the size ensemble particles to be generated. Since these particle trajectories can be simulated independently, this step is naturally parallelizable and can be implemented efficiently on modern parallel hardware.
    
    \item In the algorithm, in general we could start with $B$ initial particles $\mathbb{P}(X_0|\mcal{Y}_0)$ and proceed by generating $M$ particles for the posterior step. In the current algorithm design, we take $M=B$. In fact, once the data assimilation passes burn-in time, one could produce arbitrarily large particle cloud for the posterior distribution, i.e. $M>B$.

    \item The algorithm does not introduce surrogate model approximate error as in \cite{feng1} where in the update step, such error is introduced in the approximation for the score function of the posterior. In Algorithm \ref{algorithm_EnSBF} though, the mapping at the analysis step is approximated via: 
    $\tilde{A}_j \tilde{\mu}_{j+1} \approx \tilde{A}^{N,B}_j \tilde{\mu}_{j+1} =\frac{1}{B}\sum^B_{\iota=1} \delta_{V^{N,\iota}_{T}}$
    where errors are introduced only in the Euler–Maruyama discretization of the SDE \eqref{scheme_sde}, the ensemble average approximation of the expectation in the drift term \eqref{exact_drift_gtilde} and the ensemble approximation in the forecast and posterior distributions. 
    
    \item Importantly, the empirical approximation in \eqref{app_r} may suffer from numerical instability in implementation. The exponential weights can overflow or underflow when the dimension is large, or when $t$ is close to $T$, due to the factor $(T-t)^{-1}$ in the exponent. In this case, we can improve numerical stability by using a log-sum-exp implementation. More precisely, one subtracts the same constant, typically the maximum log-weight, from all logarithmic weights before exponentiation. Since this common shift appears in both the numerator and the denominator, it does not change the value of the drift mathematically.

    In more challenging regimes, however, the weights may remain highly concentrated even after log-sum-exp stabilization. In this case, one may further temper or clip the logarithmic weights to prevent degeneracy. Such a modification is inspired by logit tempering and can improve numerical robustness in practice. However, such approach may introduce bias into the resulting drift approximation.  

    \item In the scheme above, all auxiliary particles are initialized at the origin during
the analysis step. In fact, it can be shown that one could construct $\bP \in \mcal{P}(\Omega)$ induced by a stochastic process such that it interpolates between $\delta_{a}, a \in \bR^d$ and the target distribution $\mu$ (see also \cite{sb10}). 
The drift term then takes the following form: 
\begin{align}
    \tilde{\alpha}(t,x,\tilde{\mu})&:=\frac{\E_{Z \sim \tilde{\mu}}[(Z-x) \exp(-\frac{1}{2(T-t)}|Z-x|^2+\frac{1}{2 T} |Z-a|^2)\tilde{g}(Z)]}{(T-t)\E_{Z \sim \tilde{\mu}}[\exp(-\frac{1}{2(T-t)}|Z-x|^2+\frac{1}{2 T} |Z-a|^2)\tilde{g}(Z)]}
\end{align}
From an implementation perspective, the benefit of this form is that one can select the starting position of initial ensemble other than 0. Such a setup will be useful when the samples from distribution $\tilde{\mu}$ are far from zero in which case one can choose $a$ to be the empirical mean of the ensemble associated with $\tilde{\mu}$. 

\item Moreover, we note that the quality of the numerical approximation \eqref{app_r} for equation \eqref{exact_drift_gtilde} depends on the quality of the samples coming from $\tilde{\mu}_{j+1}$, and the function $\tilde{g}_j$. In the Gaussian observation noise case, the likelihood takes the form
\begin{align}\label{likelihood_cm1}
   \tilde{g}_j(z) \propto \exp \bsparath{-\frac{1}{2}(g(z)-Y_{j+1})^T\Sigma^{-1} (g(z)-Y_{j+1}))}
\end{align}
where $Y_{j+1}$ are the state of the observational process. 
If data samples from $\tilde{\mu}_{j+1}$ are too far away from the mode of function $\tilde{g}_j$, the exponential decay behavior of \eqref{likelihood_cm1} will render the Monte Carlo integration in \eqref{exact_drift_gtilde} inefficient.  As such, one could resort to other simulation techniques which could potentially improve the accuracy of approximation for the drift term. As a starter, inspired by importance sampling technique, one can perform a change of measure for potential performance improvement. 

To proceed, instead of using the $\lbrace \tilde{X}^i_j \rbrace^B_{i=1}$ particles from the prediction step, one can use an alternative proposal which also incorporates the new observation $Y_{j+1}$: 
\begin{align}
    \tilde{X}^i_{j+1} \sim \mathbb{Q}(\cdot|X^i_j, Y_{j+1}).
\end{align}
That is, at the $j$th filtering step, one has available the ensemble particles $\lbrace X^i_{j}\rbrace^B_{i=1}$ and also the observation $Y_{j+1}$. Thus, samples are drawn from a proposal distribution $\mathbb{Q}(\cdot|X^i_j, Y_{j+1})$. Denoting $h_a(z,x):=\exp(-\frac{1}{2(T-t)}|z-x|^2+\frac{1}{2 T} |z-a|^2)$, and noting that $\tilde{\mu}_{j+1} = \bbP(X_{j+1}|Y_{j})$ then after change of measure, the expectation can be replaced with  
\begin{align}\label{change_of_measure_eq}
\E_{Z \sim \tilde{\mu}_{j+1}}[h_a(Z,x)\tilde{g}_j(Z)]& = \E_{Z \sim \mathbb{Q}(x|X_j, Y_{j+1}), X_j \sim \mu_j}[h_a(Z,x) \frac{\bbP(Y_{j+1}|Z)\bbP(Z|X_{j})}{\mathbb{Q}(Z|X_j, Y_{j+1})}]. 
\end{align}
 We note that in \eqref{change_of_measure_eq}, the choice of $\mathbb{Q}(\cdot |X_j, Y_{j+1}):= \bbP(\cdot |X_j)$ will recover the original algorithm. Thus defining $w^l_j:= \frac{\bbP(Y_{j+1}|\tilde{X}^l_{j+1})\bbP(\tilde{X}^l_{j+1}|X^l_{j})}{\mathbb{Q}(\tilde{X}^l_{j+1}|X^l_j, Y_{j+1})}$ the new function $\alpha$ at each filtering stage can be  expressed in the following form: 
\begin{align}
    \bar{\alpha}_j(t,x) := \frac{\sum^B_{l=1} (\tilde{X}^l_{j+1} -x) h_a(\tilde{X}^l_{j+1},x) w^l_j}{(T-t)\sum^B_{l=1} h_a(\tilde{X}^l_{j+1},x) w^l_j} \label{change_of_measure_app}
\end{align} 
\item For the convergence analysis of EnSBF, the main theoretical challenge lies in establishing convergence of the one-step static Schrödinger bridge sampler corresponding to Algorithm~\ref{algorithm_drift}. Once this result is available, the analysis of the full filtering procedure can be developed through a recursive argument similar to that used for particle filters. Existing convergence results for Schrödinger bridge samplers generally assume that the target distribution is available in analytic form \cite{jiao1,wangzhang}. In the present setting, however, the target distribution is accessible only through samples, and the corresponding numerical convergence result requires further development. A forthcoming work by the author addresses this sample-based setting, while a complete convergence analysis of the full EnSBF framework is left for future research. 

\end{enumerate}

\subsection{Further discussion on EnSBF}
In this section, we discuss several theoretical and computational aspects of EnSBF. As we will show in Section 3.3, the EnSBF achieves competitive performance compared with other nonlinear filters in low/moderately high-dimensional settings. Moreover, Section 2.2 shows that the reverse SDE arising from score-based diffusion models is a special case of the controlled SDE associated with the Schr{\"o}dinger bridge problem. This establishes a natural link between the EnSF and EnSBF methods. The figure below summarizes the design process and highlights the connection between the two approaches. 

\begin{figure}[h]
\centering
\begin{tikzpicture}[
    bluebox/.style={rectangle, draw=black, thick, fill=blue!10, 
                    minimum width=3cm, minimum height=1.5cm, align=center},
    pinkbox/.style={rectangle, draw=black, thick, fill=pink!30, 
                    minimum width=2.5cm, minimum height=1.2cm, align=center},
    line/.style={thick, black},
    arrow/.style={-{Stealth[scale=1.2]}, thick, black!70}
]

\node[bluebox] (left) {SBP};

\draw[line] (left.east) -- ++(1,0) coordinate (split-point);

\draw[line] (split-point) -- ++(0,1.2);  
\draw[line] (split-point) -- ++(0,-1.2); 

\draw[line] (split-point) ++(0,1.2) -- ++(1,0) coordinate (up-end);
\draw[line] (split-point) ++(0,-1.2) -- ++(1,0) coordinate (down-end);

\node[bluebox, anchor=west] (up-box) at (up-end) 
    {$b=0$ \\ $\sigma$ constant};
    
\node[bluebox, anchor=west] (down-box) at (down-end) 
    {$b=-\tilde{b}_{1-t}$ \\ $\sigma_t=\tilde{\sigma}_{1-t}$};

\draw[line] (up-box.east) -- ++(1,0) coordinate (up-right);
\draw[line] (down-box.east) -- ++(1,0) coordinate (down-right);

\node[bluebox, anchor=west] (up-final) at (up-right) 
    {$f_0=\delta_0$ \\ $g_1(y)= \frac{p_{\text{data}}(y)}{\phi_\sigma(y)}$};
    
\node[bluebox, anchor=west] (down-final) at (down-right) 
    {$f_0=1$ \\ $g_1(y)=p_{\text{data}}(y) e^{I_{0,1}}$};

\draw[arrow] (up-final.east) -- ++(0.75,0) 
    node[pinkbox, anchor=west] (pink1) {EnSBF};

\draw[arrow] (down-final.east) -- ++(0.75,0) 
    node[pinkbox, anchor=west] (pink2) {EnSF};

\fill[black] (split-point) circle (2.5pt);

\end{tikzpicture}
\caption{Flow chart of scheme design: starting from the SBP, the split node separates into the top part and bottom part. The first two vertical boxes correspond to the reference SDE used to formulate the SBP; the next two blue vertical boxes correspond to the solution strategies, i.e. the choices of $f_0,g_1$. The dichotomy results in the EnSBF and EnSF filters accordingly.}
\end{figure}

Both EnSBF and EnSF can be interpreted through the Schr{\"o}dinger bridge framework, but they correspond to different choices of reference dynamics. The EnSBF uses $b_t=0$, $\sigma_t=\sigma$, whereas the EnSF is connected to the time-reversed reference coefficients $b_t=-\tilde b_{1-t}$ and $\sigma_t=\tilde\sigma_{1-t}$, where $\tilde b_t$ and $\tilde\sigma_t$ are defined in \eqref{ensf_eq1}--\eqref{ensf_eq2}. These different reference choices lead to different reverse, or generative, SDEs and therefore to different numerical
simulation schemes. Motivated by their connection through the Schr{\"o}dinger bridge formulation, these two approaches complement each other in different dimensional regimes.

Although the EnSBF algorithm is parallelizable, its arithmetic complexity remains $\mathcal{O}(NB^2d)$, where $N$ denotes the number of artificial-time discretization steps, $B$ is the ensemble size, and $d$ is the state dimension. The quadratic dependence on $B$ arises because, at each artificial-time step, every generated posterior particle interacts with all forecast particles when evaluating the empirical drift. Consequently, the computational cost can become substantial as either the ensemble size or the state dimension increases. As demonstrated in the numerical experiments of the next section, this cost remains manageable for low- and moderate-dimensional problems, since EnSBF typically performs well with fewer than $100$ artificial-time steps and ensemble sizes below $2^7$. In high-dimensional settings, however, global likelihood weights can become severely degenerate. The results in Figure~\ref{fig:comparison_ensemble_temperature} indicate that increasing the ensemble size may then be necessary to maintain satisfactory accuracy, which further amplifies the computational burden. 

To mitigate the deterioration caused by weight degeneracy in high dimensions, we introduce a spatial localization strategy inspired by local particle filtering \cite{van_handel}. Rather than updating every coordinate of a generated particle using the same global weight, we construct coordinate-dependent weights from observations and state components within a prescribed local neighborhood. Consequently, each coordinate is updated using information that is locally relevant to that coordinate, thereby reducing the effective dimension of the weighting problem and alleviating global weight degeneracy. That is, instead of computing the weights globally for each coordinate as
    \begin{align}
        \pi^m(t,x):= \frac{w^m(t,x)}{\sum w^m(t,x)}, \quad w^m(t,x) = \exp \Big( \frac{|X^m-a|^2}{2}-\frac{|X^m-x|^2}{2(1-t)} - \frac{|g(X^m)-y^{\text{obs}}|^2}{2 \sigma^2_{\text{obs}}} \Big),
    \end{align}
    we consider a local weight construction. For each coordinate $j \in \lbrace 1, ..., d \rbrace$, define a collection of the neighborhood coordinates $\mcal{O}_j := \lbrace j-r, ...j,..., j+r \rbrace$. Then for each particle $X^m$, $m \in \lbrace 1, ..., B \rbrace$, we define the local error: 
    \begin{align}
        E^m_j := \frac{1}{|\mcal{O}_j|} \sum_{k \in \mcal{O}_j} \frac{(g_k(X^m)-y_k^{\text{obs}})^2}{\sigma^2_{\text{obs}}}. 
    \end{align}
We define $L^j_m : = \exp(-\frac{1}{2} E^m_j)$. In a similar fashion we let: 
\begin{align}
    K^j_m=\exp \left( \frac{1}{2|\mcal{A}_j|} \sum_{k \in \mcal{A}_j} |X^m_k -a_k|^2 - \frac{1}{2(1-t)|\mcal{A}_j|} \sum_{k \in \mcal{A}_j}|X^m_k -x_k|^2 \right)
\end{align}
where $\mcal{A}_j := \lbrace j-r', ...j,..., j+r' \rbrace$. Then we have: 
\begin{align}
    \pi^j_m(t,x) = \frac{w_m^j(t,x)}{\sum^B_{l=1} w_l^j(t,x)}, \qquad w^j_m:= K^j_m L^j_m.
\end{align}
Based on the discussion, the drift in the EnSBF becomes: 
\begin{align}
    \alpha^j(t,x) := \frac{\sum^B_{m=1} \pi^{j}_m (t,x) X^m_j -x_j}{1-t}.
\end{align}
This localization strategy is motivated by the observation that, in many spatially extended dynamical systems, the interaction between nearby state coordinates is substantially stronger than that between distant coordinates. Consequently, the update of a given coordinate can often be constructed primarily from information contained in a local neighborhood without discarding the dominant dependence structure of the system \cite{van_handel}. Although this approximation introduces localization, or bias by truncating long-range dependencies which is further combined with temperature, it alleviates the deterioration caused by global weight degeneracy in high dimensions. The current localization method is designed purely for the case where the states are fully observed. Extensions to sparse or partially observed systems are left for future work.

The leading computational complexity remains $\mathcal{O}(NB^2d)$, since the localized coordinate averages can be computed once and reused throughout each drift evaluation. Nevertheless, our numerical experiments indicate that the localized EnSBF performs well in high-dimensional settings with both the number of artificial-time steps $N$ and the ensemble size $B$ remaining on the order of $20$, even when the state dimension is approximately $10^3$. Detailed numerical results are presented in the next section.

\subsection{Numerical examples for EnSBF}
In this section, we provide numerical examples for the designed algorithms.
\begin{enumerate}
    \item Example 1 is a one-dimensional example demonstrating that, in low dimensions with only mild nonlinearity, the EnSBF achieves performance comparable to the Ensemble Kalman Filter and the Particle Filter. We also study the convergence behavior of the method as the number of temporal discretization steps and the ensemble size used for posterior sample generation are increased. 
    
    \item Example 2 considers a one-dimensional double-well potential problem. This example shows that the EnSBF can outperform both benchmark methods when shocks are present in the signal dynamics. 
    
    \item Example 3 provides a more detailed comparison between the Particle Filter and the EnSBF for one-step data assimilation setting. We design a one-step filtering problem in which the prior, likelihood, and posterior densities are all known analytically. The numerical results show that, even when a large ensemble size is used, the EnSBF produces a less impoverished equal weight ensemble and is competitive with PF, while neither method uniformly dominates across all reported metrics.

    \item Example 4 considers the Lorenz 96 model, where the performance of the EnSBF is tested across dimensions ranging from $4$ to $100$ under linear observations. The performance is compared against the EnKF, PF, and EnSF. The results show that the EnSBF performs reasonably well in lower dimensions, while its predictions tend to deteriorate in high dimensions, exhibiting larger fluctuations around the true signal. The EnKF shows generally strong performance, likely due to the relatively mild nonlinearity of the underlying dynamics. Finally, when the signal goes through nonlinear transformation in the observation process, the performance of EnSBF is observed to deteriorate quickly once it goes above 40.
    
    \item Example 5 considers Lorenz-96 model and the Kuramoto-Sivashinsky (KS) PDE in high dimension where coordinate localization technique is used in conjunction with EnSBF. In this case, model structural bias is deliberately introduced so that EnSBF can more effectively perform state estimation. 
\end{enumerate}

\subsubsection{Example 1: the sine function.}
In this example, we consider the following simple 1-D dynamical system: 
\begin{align}
X_{j+1} &= \alpha \sin(X_j) + \sigma \xi_j, \ \ \xi_j \sim \mcal{N}(0,1) \label{eg_d1} \\ 
Y_{j+1} &=X_{j+1}+ \gamma  w_{j+1}, \ \ w_{j+1} \sim \mcal{N}(0,1) \label{eg_d2} 
\end{align}
In this example, we set $\alpha=2.5$, $\sigma=0.2$ and $\gamma=1.0$. We compare the performance of three nonlinear filtering methods: the Ensemble Kalman Filter (EnKF), the Particle Filter (PF), and the Ensemble \sbb Filter (EnSBF). For all methods, the ensemble size is set to 500. The test is performed independently for 30 times and an average RMSE is computed.  For the EnSBF, the number of temporal discretization steps used to simulate the auxiliary SDE in each analysis step is set to $100$.  Figure \ref{fig:1dcompare} a shows that all three filters exhibit a decaying trend in the smoothed RMSE which is computed by using a moving window of size 50. As expected, the EnKF slightly underperforms compared to PF and EnSBF due to the nonlinearity in the dynamics \eqref{eg_d1}. Meanwhile, since this is a one-dimensional example and a relatively large ensemble size is used, the PF and the EnSBF demonstrate similar performance. 

To study the convergence behavior of the EnSBF over 800 filtering steps, we first fix the ensemble size to be 200 and change the number of Euler discretization $N$. For each value of $N$, we repeat the experiment $50$ times and report the averaged smoothed RMSE. As shown in Figure \ref{fig:1dcompare} b, the smoothed RMSE decreases as the number of discretization steps $N$ in \eqref{discrete_drift_1} increases. This behavior is expected, since the quality of the generated particles depends on the accuracy of the Euler approximation of the diffusion SDE \eqref{scheme_sde}.  

We then fix the number of temporal discretization steps $N$ and vary the ensemble size $B$. Again, each experiment is repeated $50$ times, and the averaged smoothed RMSE is reported. Figure \ref{fig:1dcompare} c shows that the smoothed RMSE also decreases as the ensemble size increases. This is consistent with the fact that the approximation of the drift term depends on the number of particles used to approximate the expectations in \eqref{app_r}.  
\begin{figure}[h]
    \centering 
    \includegraphics[width=0.9\textwidth]{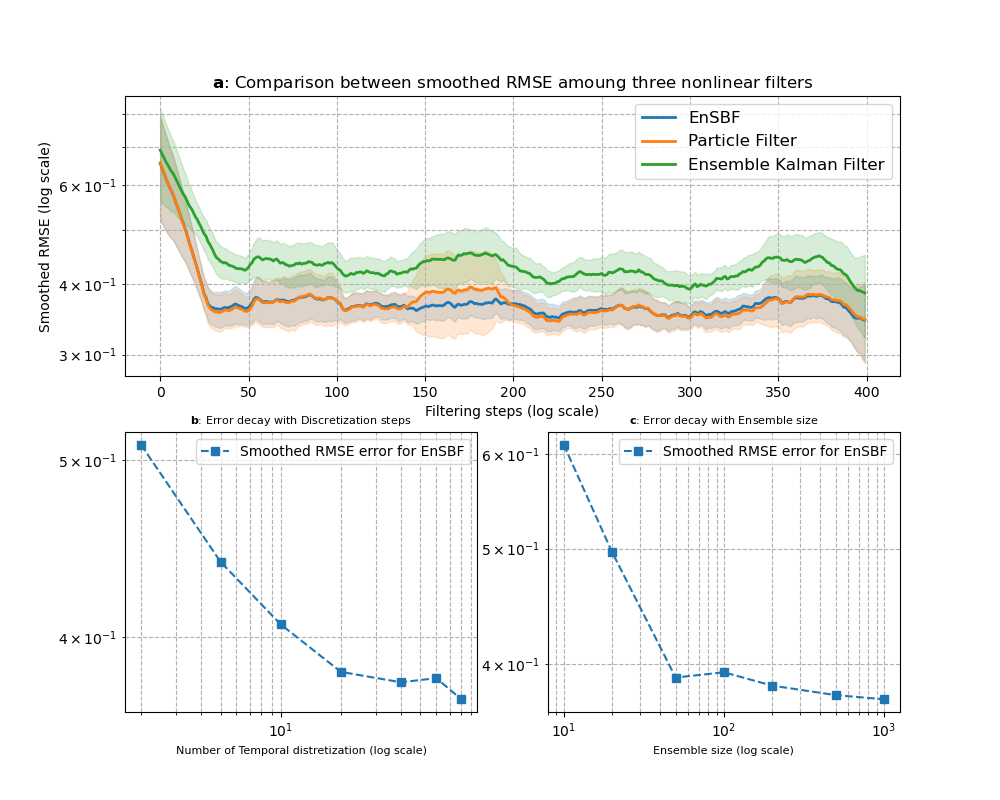} 
    \caption{a: Comparison between the smoothed mean RMSE among the three nonlinear filters with a moving average window of size 50. b: Error decay with respect to the number of temporal discretization $N$ in the diffusion SDE, holding the ensemble size $B$ fixed. c: Error decay with respect to the size of the particle ensemble, holding $N$ fixed} 
    \label{fig:1dcompare} 
\end{figure}

\subsubsection{Example 2. The 1-D potential well }
The second example considers the double potential-well where the drift term is given by the function $-4x(x^2-1)$. 
The dynamics is given as follows: 
\begin{align}
S_{n+1}&=S_n-(4\cdot S_n(S^2_n-1))\Delta t+\beta \sqrt{\Delta t} \omega_n \label{pw1} \\ 
M_{n+1}&=S_{n+1} +\gamma_{n+1} \label{pw2}
\end{align}
Here, the signal particles are propagated according to \eqref{pw1}, while the observations are generated from \eqref{pw2}. To test the responsiveness of the filters to abrupt transitions, the true state is manually switched between the two potential wells, $1$ and $-1$, every $40$ time steps. Note that it is easy for the nonlinear filter to track the stationary state while hard to effectively respond to the sudden state change.  

In this example, we compare the filtering results obtained from several nonlinear filtering methods: the Ensemble Kalman Filter (EnKF), the Particle Filter (PF), the Ensemble Schr{\"o}dinger Bridge Filter (EnSBF), and a variant of the EnSBF based on the change-of-measure, denoted by EnSBF-I. We set $\gamma_{n}\sim\mathcal N(0,0.1)$ for all time steps and choose
$\Delta t=0.1$. We comment on the test results presented in Figure \ref{fig:1-d pw} with the exact states shown in blue, the EnSBF shown in yellow, EnSBF\_I in green, PF in red and ensemble KF in purple. 

\begin{enumerate}[i.)]
    \item In the case shown in Figure \ref{fig:sub1}, where a large ensemble size is used $(B=1000)$ and the signal noise level in \eqref{pw1} is set to  $\beta=0.3$, all nonlinear filters are able to track the signal reasonably well. However, the EnSBF provides more accurate state tracking compared with the benchmark methods.

    \item In the case shown in Figure \ref{fig:sub2}, the same ensemble size $(B=1000)$ is used, but the signal noise level is reduced to $\beta=0.2$. In this regime, neither the EnKF nor the PF performs well. This phenomenon can be explained by the fact that particle filters rely heavily on the quality and coverage of the ensemble particles for distributional approximation. When a sudden state transition occurs, the drift of the double-well model still pulls particles toward the current potential well. As a result, the particle cloud may remain highly concentrated near the
    bottom of one well, leading to a small ensemble variance and insufficient particle coverage in the tail region where the transition occurs. The EnKF also struggles in this example because the dynamics are highly nonlinear and the posterior distribution can be strongly non-Gaussian. Since the EnKF relies on a Gaussian-type approximation of the filtering distribution, it may fail to capture the multimodal or non-Gaussian structure induced by the double-well dynamics. In contrast, both EnSBF-based approaches remain effective in this setting.
    
    \item In the case shown in Figure \ref{fig:sub3}, the ensemble size is reduced to $B=20$, while the signal noise level is kept at $\beta=0.3$. The two Schr{\"o}dinger bridge-based filters and the EnKF are still able to track the signal, with the EnSBF showing slightly better performance. Since the EnKF relies on a Gaussian-type approximation of the filtering distribution, reducing the ensemble size does not significantly degrade its performance in this relatively simple setting. In contrast, the PF represents the filtering distribution solely through weighted particles. Therefore, when the ensemble size is small, the PF fails to produce an accurate
    posterior approximation.

    Overall, this example shows that the proposed EnSBF method does not require a large number of particles to capture the relevant statistical features of the filtering distribution between observation times. This represents a significant improvement over the PF, which typically requires a much larger particle ensemble to achieve stable performance (see Figure \ref{fig:sub1}). To understand why EnSBF succeeds in this setting, note that its mechanism differs fundamentally from that of the particle filter. EnSBF evolves the ensemble through a stochastic bridge whose drift is guided by the likelihood information. Consequently, the particles are continuously transported toward regions favored by the new observation. This allows an ensemble initially concentrated near one potential well to move rapidly toward the other well when the observation indicates a state transition.

    \item Lastly  in (\ref{fig:sub4}), We use an ensemble size of 20 and reduce the noise in the signal to $\beta=0.1$. In this case,  the performance of the original EnSBF starts to deteriorate. Both the EnKF and PF fail to perform while the EnSBF-based filters still demonstrate some tracking ability. 
\end{enumerate}

\begin{figure}[htbp]
    \centering
    \begin{subfigure}[b]{0.5\textwidth}
        \includegraphics[width=\textwidth]{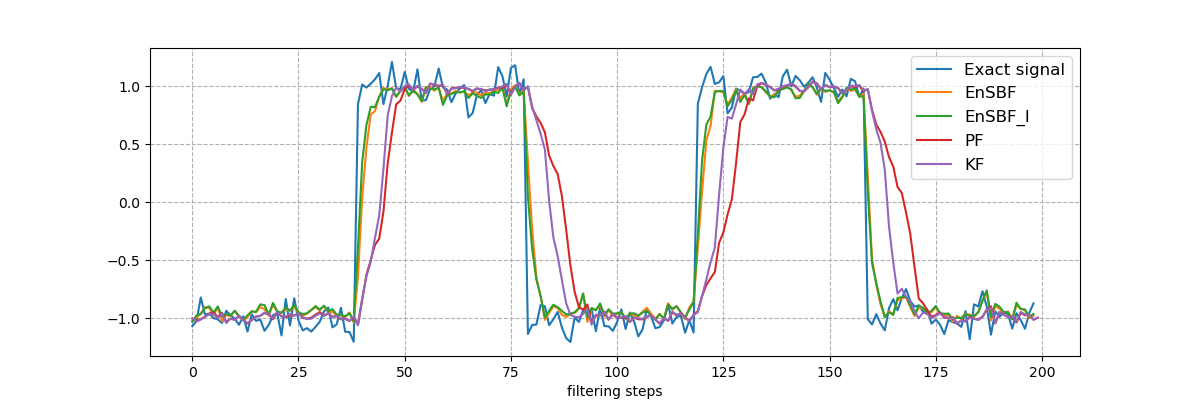}
        \caption{Ensemble size $1000$, $\beta=0.3$}
        \label{fig:sub1}
    \end{subfigure}
    \hspace{-0.5cm}
    \begin{subfigure}[b]{0.5\textwidth}
        \includegraphics[width=\textwidth]{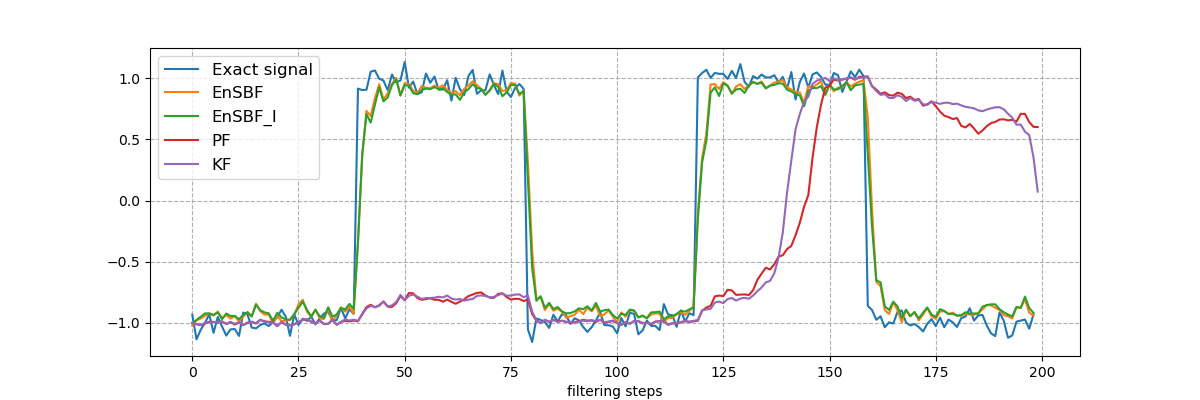}
        \caption{Ensemble size $1000$, $\beta=0.2$}
        \label{fig:sub2}
    \end{subfigure}
    \vspace{0.1cm} 
    \begin{subfigure}[b]{0.5\textwidth}
        \includegraphics[width=\textwidth]{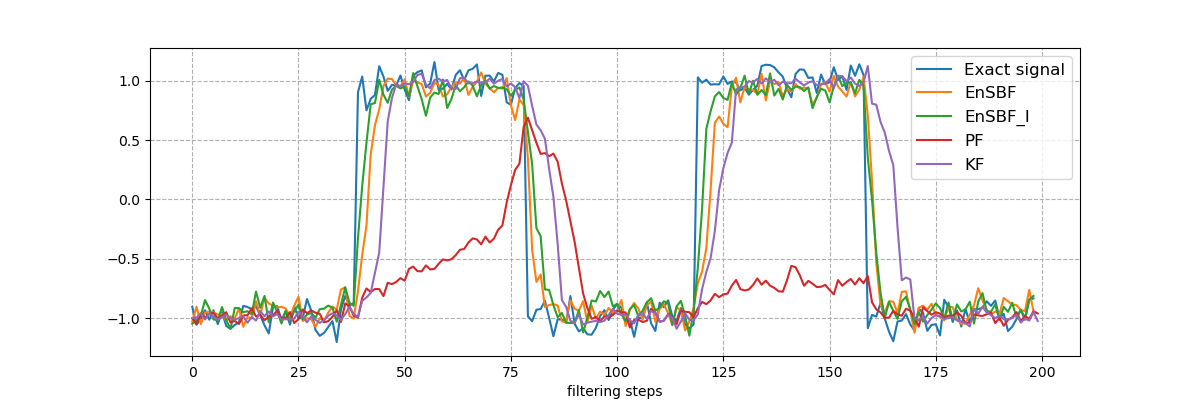}
        \caption{Ensemble size $20$, $\beta=0.3$}
        \label{fig:sub3}
    \end{subfigure}
    \hspace{-0.5cm}
    \begin{subfigure}[b]{0.5\textwidth}
        \includegraphics[width=\textwidth]{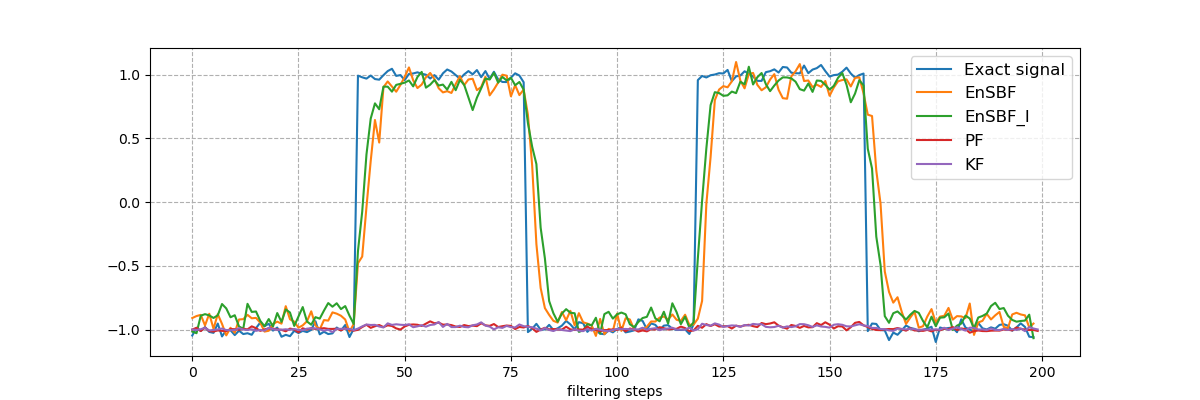}
        \caption{Ensemble size $20$, $\beta=0.1$}
        \label{fig:sub4}
    \end{subfigure}
    \caption{Comparison of state tracking among nonlinear filters  }
    \label{fig:1-d pw}
\end{figure}

\subsubsection{Example 3, Further comparison between EnSBF and PF}
As a third numerical experiment, we consider a one-step filtering problem in which both the prior density and the likelihood are available in closed form, allowing the corresponding posterior density to be evaluated analytically. For each test case, we draw $2{,}000$ samples from the prior and generate $2{,}000$ posterior samples using both EnSBF and the particle filter. The EnSBF stochastic differential equation is discretized using $500$ time steps. The three test cases are specified as follows.
\subsubsection*{Multimodal Gaussian Mixture Prior}
 More specifically, let the prior/likelihood be given as Gaussian mixture/Gaussian: 
\begin{align}
    \mathbb{P}(X_{j+1}=x|\mcal{Y}_j) &= \frac{1}{4} \sum^4_{k=1} \mcal{N}(x| \mu_k, \sigma^2 I) && \textit{Prior density} \\
    \mathbb{P}(Y_{j+1}|X_{j+1}=x) &\propto \mcal{N}(x|\mu_0, \epsilon^2 I) && \textit{Likelihood density} \\
    \mathbb{P}(X_{j+1}|\mcal{Y}_{j+1})& = \sum^4_{k=1} w_{k}\mcal{N}(x| m_k,\Sigma_k) && \textit{Posterior density} 
\end{align}
where $m_k=\frac{\epsilon^2 \mu_k + \sigma^2 \mu_0}{\sigma^2 + \epsilon^2}, \Sigma_k=\frac{\sigma^2 \epsilon^2}{\sigma^2+\epsilon^2} I$ and $ w_k=\frac{\exp\left(-\frac{|\mu_k-\mu_0|^2}{2(\sigma^2+\epsilon^2)}\right)}{\sum_{\ell=1}^4\exp\left(-\frac{|\mu_\ell-\mu_0|^2}{2(\sigma^2+\epsilon^2)}\right)}$. For this example, we choose $\mu_0=(1.2,0.0), \mu_1=(1.5,1.0), \mu_2=(1.0,-1.0), \mu_3=(-1.5,1.0), \mu_4=(-1.0,-1.0)$, $\sigma=0.2$ and $\epsilon=0.25$.

\subsubsection*{Quadratic Nonlinear observation}
For the second example, we again consider a two-dimensional example where with the observational data given to be $y_{\text{obs}}=(1,0)$, $\sigma_{\text{obs}}=0.25$, the observation function is given by $h(x_1,x_2) =(x^2_1, x_2)^T$ and the density function is given by $p_{\textbf{prior}} \propto \exp( -\frac{x^2_1 + x^2_2}{2})$. Hence, the likelihood and the posterior take the following form: 
\begin{align}
    p_{\text{likelihood}} & \propto \exp \left( -\frac{(x^2_1-1)^2+x^2_2}{2 \sigma^2_{obs}} \right) \\
    p_{\text{posterior}} & \propto \exp(-\frac{x^2_1+x^2_2}{2} - \frac{(x^2_1-1)^2+x^2_2}{2 \sigma^2_{obs}}).
\end{align}

\subsubsection*{Radial Nonlinear observation}
For the third example, we take the prior distribution to be a centered Gaussian with precision matrix $\Sigma^{-1}$, given by
$$
p_{\text{prior}}(x)\propto \exp \left(-\frac{1}{2}x^\top \Sigma^{-1} x\right),
\qquad
\Sigma=
\begin{pmatrix}
0.5 & -0.4\\
-0.4 & 0.5
\end{pmatrix}.
$$
The observed value and the noise are $y_{\text{obs}}=1.5$ and $\sigma_{\text{obs}}=0.1$. The nonlinear observation function is given by $h(x_1, x_2)= \sqrt{(x_1-1)^2 + (x_2-1)^2}$. Hence we have: 
\begin{align}
    p_{\text{likelihood}} & \propto \exp \left( - \frac{\left( \sqrt{(x_1-1)^2 + (x_2-1)^2} -3/2 \right)^2}{2 \sigma^2_{\text{obs}}} \right)\\ 
    p_{\text{posterior}} & \propto  \exp \left( -\frac{1}{2}x^\top \Sigma^{-1} x  - \frac{\left( \sqrt{(x_1-1)^2 + (x_2-1)^2} -3/2 \right)^2}{2 \sigma^2_{\text{obs}}} \right).
\end{align}
\subsubsection*{Test result}
The test results are summarized in Table \ref{tab:sbf-pf-one-step} and Figure \ref{fig:designed_compare}. We make the following observations: the posterior ensemble distributions obtained from both filtering methods can capture the geometry of the true posterior and the reported statistics are close. However, judging from the particle filter for all three testing examples, the particle filter suffers substantial weight degeneracy as indicated by the small effective sample size and the limited number of unique particles remaining after resampling. This sample impoverishment is also visible in the particle clouds in the last column of figures in Figure \ref{fig:designed_compare}, which contain more pronounced clustering and repeated sample locations. Overall, these results suggest that SBF is less affected by post-resampling sample impoverishment in settings with strongly concentrated likelihood weights.

\begin{table}[htbp]
\centering
\caption{Comparison of the one-step posterior approximations obtained using SBF and the particle filter. All reported values are rounded to two decimal places.}
\label{tab:sbf-pf-one-step}
\resizebox{\textwidth}{!}{
\begin{tabular}{llccccccc}
\toprule
Example & Method & Mean error & Covariance error & Observation RMSE & Sliced $W_2$ & Empirical $W_2$ & PF ESS & Unique ancestors \\
\midrule
Multimodal GMM
& SBF & 0.08 & 0.04 & 0.69 & 0.13 & 0.21 & -- & -- \\
& PF  & 0.09 & 0.03 & 0.68 & 0.14 & 0.23 & 52.91 & 282.00 \\
\midrule
Quadratic observation
& SBF & 0.12 & 0.02 & 0.35 & 0.23 & 0.30 & -- & -- \\
& PF  & 0.14 & 0.03 & 0.35 & 0.25 & 0.29 & 163.01 & 314.00 \\
\midrule
Radial observation
& SBF & 0.04 & 0.01 & 0.11 & 0.03 & 0.15 & -- & -- \\
& PF  & 0.05 & 0.01 & 0.10 & 0.04 & 0.11 & 627.79 & 678.00 \\
\bottomrule
\end{tabular}
}
\end{table}

\begin{figure}[h]
    \centering 
    \includegraphics[width=0.9\textwidth]{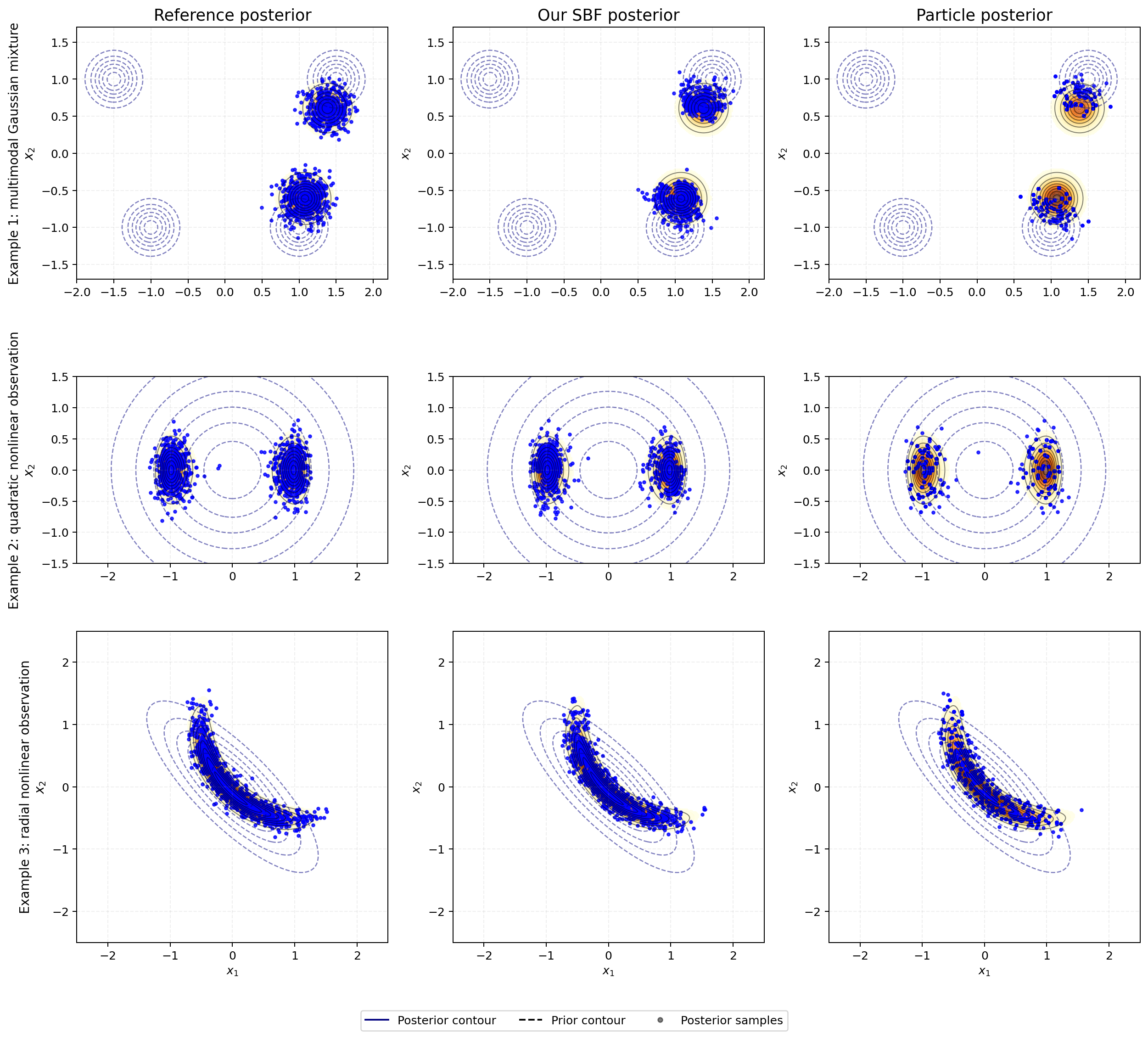} 
    \caption{Prior, likelihood and posterior density plots for the three one-step filtering examples.} 
    \label{fig:designed_compare} 
\end{figure}

\subsubsection{Example 4, the Lorenz-96 model}
For the fourth example, we consider the Lorenz-96 model. The goal is to study EnSBF's performance under different dimension regimes. The performance of the following four nonlinear filters is compared: PF, EnSBF, EnSF and EnKF. The overall observations are summarized as follows: 
\begin{itemize}
    \item  The EnSBF performs well overall, especially in low-dimensional regimes. However, its performance deteriorates as the dimension increases. In higher dimensions, the filtered estimates become noisier and tend to fluctuate more strongly around the true signal. 
    \item For the Lorenz--96 model, the EnKF also performs well. This is partly because the signal does not exhibit abrupt state transitions, unlike in the double-well potential example. As a result, the Gaussian-type approximation used by the EnKF is still able to capture the dominant structure of the filtering distribution reasonably well.
    \item The EnSF does not perform well in low-dimensional regimes, especially when the observation noise is large. This is mainly due to the model or structural error introduced by the score approximation in the algorithmic design. The EnSBF mitigates this limitation and demonstrates more competitive performance in such settings. On the other hand, when the underlying dimension becomes high, the EnSF outperforms the other filters, including the EnKF, consistent with the observations in \cite{feng6}.
\end{itemize}

We consider the dynamics in the following form
\begin{align}
    dX^i_t&=\Big( (X_t^{i+1}-X_t^{i-2})X_t^{i-1} +F \Big) dt + \sigma^i dW^i_t , \ i=1,2, ..., d, 
 d \geq 4 \label{eg2_1}\\ 
    Y_t &=\alpha X_t + \gamma_t, \ \gamma_t \sim \mcal{N}(0, \Gamma) \label{eg2_2}
\end{align}
Here, the indices are understood cyclically with $X^{-1}_t=X^{d-1}_t,X^0_t=X^d_t, X^{d+1}_t = X^1_t$. In this example, the nonlinear interaction term in the Lorenz--96 dynamics makes the filtering problem increasingly challenging as the dimension $d$ grows. We use an Euler scheme with $\delta t = 0.005$ for simulation of the dynamical system. In all of the experiments below, while changing $d$, we fix $\sigma=0.2$, across all dimensions, $F=8$, $\Gamma = 0.15^2 I$ and $\alpha=0.2$ with ensemble size $B=2^7$. In all of the tests performed, the smoothed RMSE is computed with a moving window with size 50 for visualization purposes. Tests were performed with 15 independent trials with average RMSE presented, shaded regions are pointwise $95\%$ confidence intervals for the mean. 

\subsubsection*{$\mathbf{d=4}$} 

Figure~\ref{fig:particle-4 state} shows that all four filters perform reasonably well, with EnSBF exhibiting the most stable and consistently accurate behavior. The particle filter performs comparatively worse, likely due to the relatively large observation noise and the limited ensemble size of $128$. EnSF displays a wider confidence band and requires a longer burn-in period, but its accuracy eventually becomes comparable to that of EnSBF and EnKF. EnKF also performs well, which is expected given the relatively large ensemble size and the linear observation structure.

\begin{figure}[htbp]
    \centering
    \begin{subfigure}[b]{0.49\textwidth}
        \includegraphics[width=\textwidth]{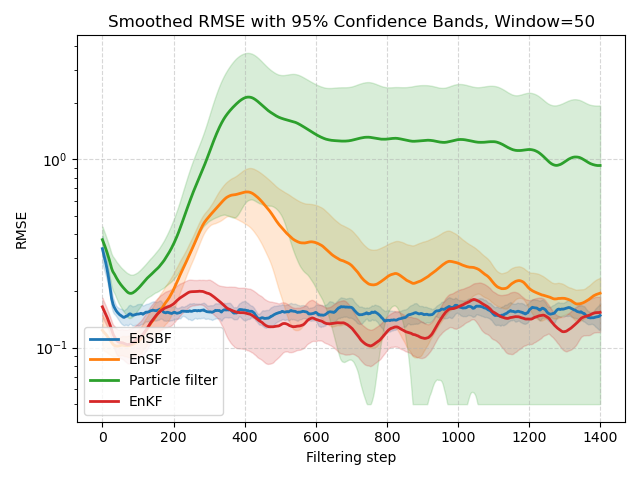}
        \caption{Smoothed RMSE}
        \label{fig:pf-sub1}
    \end{subfigure}
    \hspace{-0.6cm}
    \begin{subfigure}[b]{0.52\textwidth}
        \includegraphics[width=\textwidth]{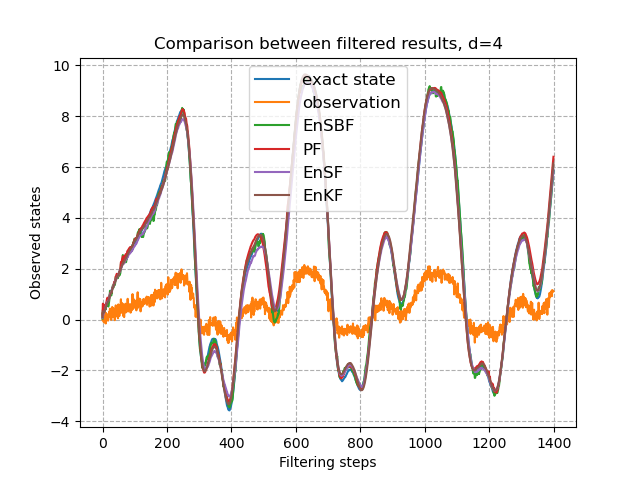}
        \caption{Representative coordinate $i=1$.}
        \label{fig:pf-sub2}
    \end{subfigure}
    \caption{Comparison of state tracking among nonlinear filters, Lorenz-96. $d=4$}
    \label{fig:particle-4 state}
\end{figure}

\subsubsection*{$\mathbf{d=20}$} 

In this case, the particle filter fails to track the signal, with its RMSE persistently oscillating between $4$ and $5$. This behavior is primarily attributable to the curse of dimensionality: as the state dimension increases, a substantially larger number of particles is required to adequately represent the filtering distribution. For this reason, the RMSE curve of the particle filter is omitted from the figure. The remaining three filters all exhibit acceptable performance, with EnKF achieving the lowest RMSE. EnSBF and EnSF attain comparable RMSE levels, although EnSF displays a noticeably wider confidence band. In dimension $20$, weight degeneracy begins to affect the likelihood term in EnSBF, reflecting a difficulty similar to that encountered by particle filters. For EnSF, structural approximation error in the posterior score introduces a relatively large bias. Together, these effects make EnSBF and EnSF less competitive than EnKF in terms of RMSE. It is also observed in Figure~\ref{fig:20dsub2} that, as the state dimension increases, EnSBF produces noisy predictions. This behavior is likely caused by the high variance of the estimated bridge drift. At each time step, the drift is constructed from likelihood-weighted averages over the entire ensemble. In higher dimensions, weight degeneracy and spurious long-range dependencies can cause these averages to be dominated by only a small number of particles, resulting in unstable state updates. This observation motivates the use of localization, which restricts each state update to relevant nearby particles or observations and may therefore reduce variance and stabilize the filtering results. 
  
\begin{figure}[htbp]
    \centering
    \begin{subfigure}[b]{0.49\textwidth}
        \includegraphics[width=\textwidth]{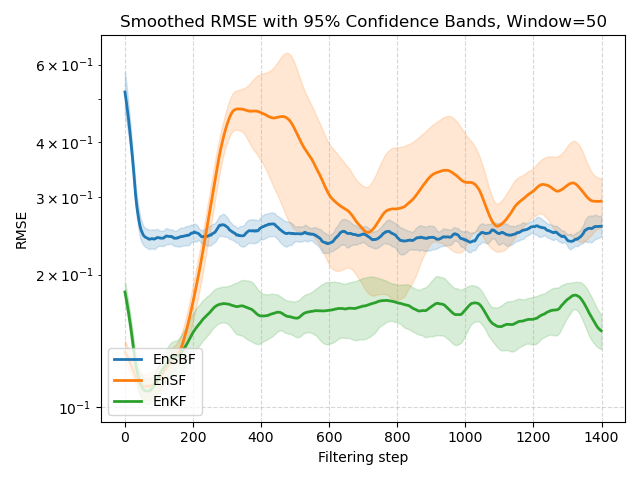}
        \caption{Smoothed RMSE}
        \label{fig:20dsub1}
    \end{subfigure}
    \hspace{-0.6cm}
    \begin{subfigure}[b]{0.52\textwidth}
        \includegraphics[width=\textwidth]{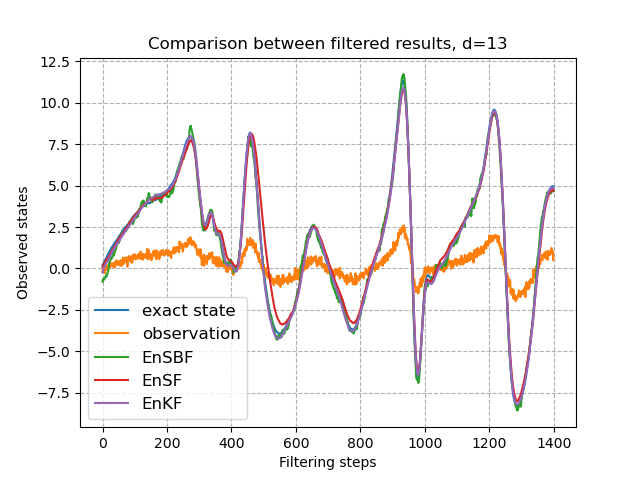}
        \caption{Representative coordinate $i=13$.}
        \label{fig:20dsub2}
    \end{subfigure}
    \caption{Comparison of state tracking among nonlinear filters, Lorenz-96. Total dimension $=20$}
    \label{fig:20-d state}
\end{figure}

\subsubsection*{$\mathbf{d=100}$} 
For a higher dimensional example, we increase the dimension to $d=100$. It is observed that EnSF begins to outperform EnSBF in the high-dimensional regime. This improved performance can be attributed to the fact that the likelihood score is already evaluated componentwise, independently of the mini-batch size. In particular, when the mini-batch size is set to one, each reverse particle is paired with a single forecast particle, so the prior score is also computed componentwise without an ensemble-weighted sum. This yields a numerically stable and efficient update. For EnSBF, the weight degeneracy issue in the likelihood term for drift estimation becomes more pronounced as dimension increases. The RMSE is observed to increase compared to the case when $d=20$ while $EnSF$ becomes smoother and remains at a similar level. The EnKF still performs better than both. This is due to the fact that the observational process is still linear and the ensemble size used is sufficiently large.  

\begin{figure}[htbp]
    \centering
    \begin{subfigure}[b]{0.49\textwidth}
        \includegraphics[width=\textwidth]{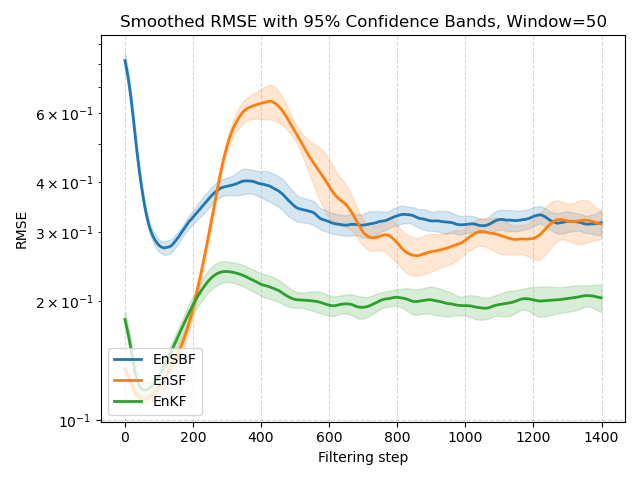}
        \caption{Smoothed RMSE}
        \label{fig:100dsub1}
    \end{subfigure}
    \hspace{-0.6cm}
    \begin{subfigure}[b]{0.52\textwidth}
        \includegraphics[width=\textwidth]{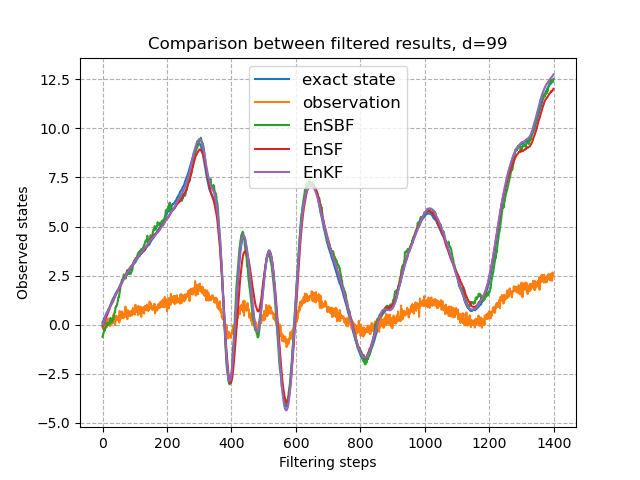}
        \caption{Representative coordinate $i=99$}
        \label{fig:100dsub2}
    \end{subfigure}
    \caption{Comparison of state tracking among nonlinear filters, Lorenz-96. Total dimension$=100.$}
    \label{fig:100-d state}
\end{figure}
\subsubsection*{Nonlinear observation}
To test the case where the observation is also nonlinear, we change \eqref{eg2_2} to the following: 
\begin{align}
     Y_t &= \text{arctan}(X_t) + \gamma_t, \ \gamma_t \sim \mcal{N}(0, \Gamma), \ \ \Gamma = 0.05^2 I_d \label{eg2_3}
\end{align}
We consider a higher-dimensional case with
$d=40$, using an ensemble size of $B=128$ and the same signal process setup as
before. The results are presented in Figure \ref{fig:tanh-40-d state}. We observe that the EnSBF achieves overall similar performance compared to EnSF in the current setting. The overall RMSE is smaller for EnSF but it is overall small compared to the scale of the signal. Throughout the test, we make the following observations regarding EnSBF: 
\begin{enumerate}
    \item We introduce a temperature parameter $\beta$ to alleviate likelihood-weight degeneracy, which is substantially more severe under nonlinear observations than in the linear observation setting. Then the weight in the drift term takes the form:
        \begin{align}
        \pi^m(t,x):= \frac{\tilde w^m(t,x)}{\sum^B_{l=1} \tilde w^m(t,x)}, \quad \tilde w^m(t,x) = \exp \Big( \frac{|X^m-a|^2}{2 \beta }-\frac{|X^m-x|^2}{2\beta(1-t)} - \frac{|g(X^m)-y^{\text{obs}}|^2}{2\beta \sigma^2_{\text{obs}}} \Big),
    \end{align}   
    By tempering the likelihood weights, $\beta$ prevents them from becoming overly concentrated and yields a more balanced contribution from the ensemble particles. This stabilization mechanism plays a crucial role in the present example. However, tempering modifies the likelihood and the bridge weights and therefore introduces bias; as a result, the resulting process no longer exactly preserves the original theoretical target distribution. 

    Eventually, we still observe difficulty in extending the EnSBF in its original form with weight tempering to higher dimensions. Some other techniques such as dimension localization could be explored; we leave it as future research. 

    \item Holding the temperature fixed while changing the ensemble size, it is observed that the RMSE decreases steadily as expected, see Figure \ref{fig:comparison_ensemble_temperature}.
\end{enumerate}

\begin{figure}[htbp]
    \centering
    \begin{subfigure}[b]{0.49\textwidth}
        \includegraphics[width=\textwidth]{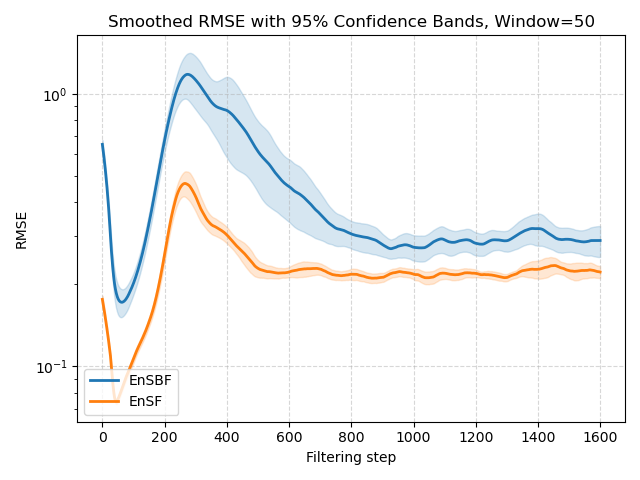}
        \caption{Smoothed RMSE}
        \label{fig:r40sub1}
    \end{subfigure}
    \hspace{-0.6cm}
    \begin{subfigure}[b]{0.52\textwidth}
        \includegraphics[width=\textwidth]{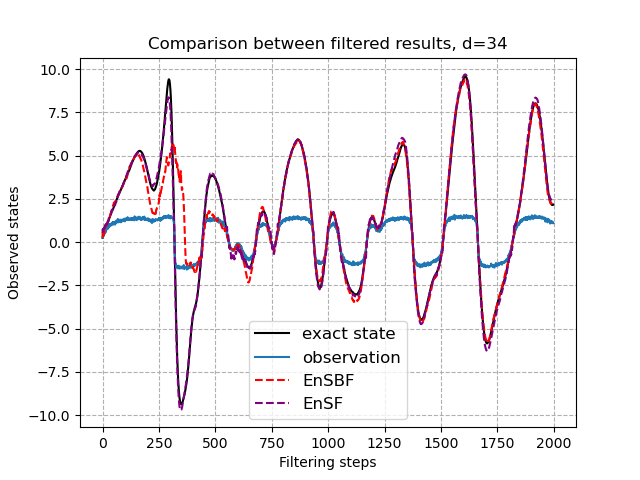}
        \caption{Representative coordinate $i=34$.}
        \label{fig:r40dsub2}
    \end{subfigure}
    \caption{Comparison of state tracking among nonlinear filters, Lorenz-96. Total dimension$=40$, Ensemble size $=128, \beta=40$.}
    \label{fig:tanh-40-d state}
\end{figure}

\begin{figure}[h]
    \centering 
    \includegraphics[width=0.9\textwidth]{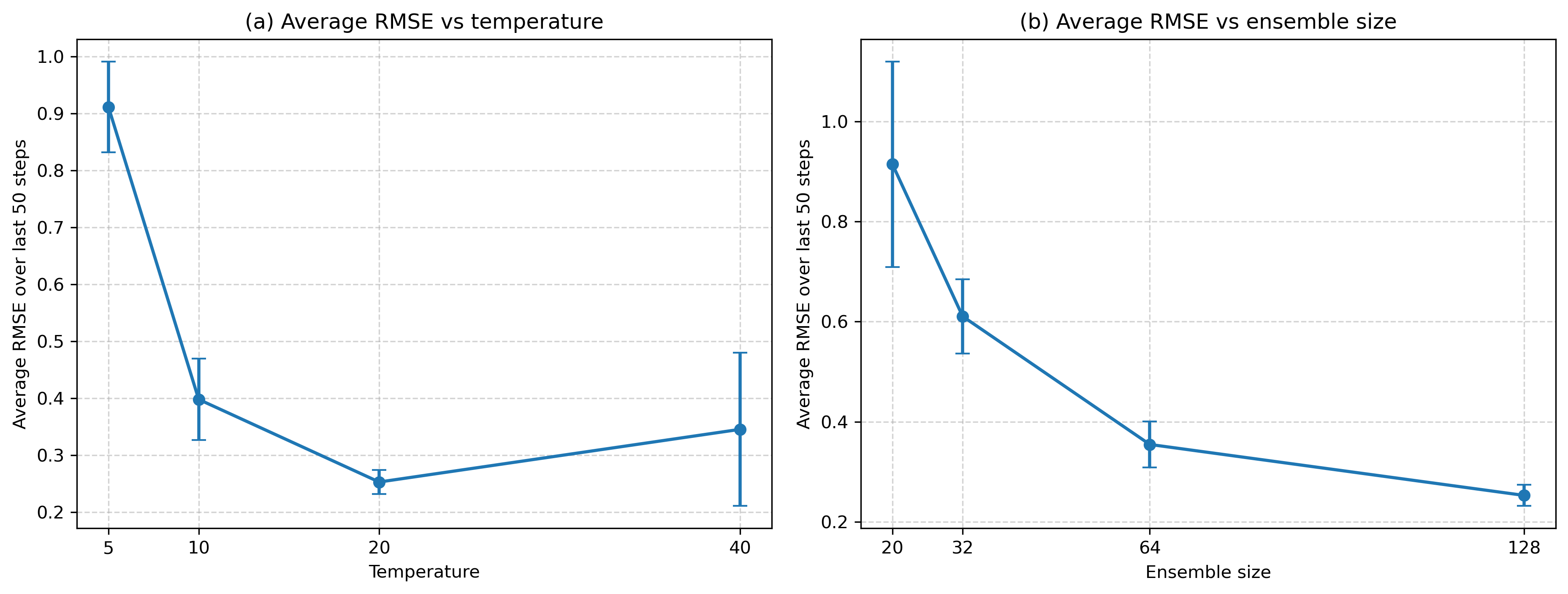} 
    \caption{Comparison of the EnSBF RMSE averaged over the final 50 assimilation steps together with approximate $95\%$ confidence intervals for different temperature and ensemble-size parameters. (a) The ensemble size is fixed at $128$ while the temperature parameter varies. The average RMSE decreases as $\beta$ increases from $5$ to $20$, but rises again at $\beta=40$. (b) The temperature is fixed at $\beta=40$ while the ensemble size varies. The average RMSE decreases monotonically as the ensemble size increases.} 
    \label{fig:comparison_ensemble_temperature} 
\end{figure}

\subsubsection{Example 5, high dimensional example EnSBF with localization.}
In this example, we present empirical analysis where the localization technique is used in EnSBF. We present two test cases, the first one being the $10,000$ dimensional Lorenz-96 model. And the second one is the $1,024$ dimension 1D Kuramoto-Sivashinsky (KS) PDE: 
\begin{align}
    \frac{\partial u}{ \partial t} + \frac{\partial^2 u}{\partial x^2}+ \frac{\partial^4 u}{\partial x^4}+\frac{1}{2} \frac{\partial u^2}{\partial x}=0, \qquad x \in [0,L].
\end{align}
subject to periodic boundary conditions and $L=128 \pi$. The filtering experiment uses DAPPER’s\footnote{\url{https://nansencenter.github.io/DAPPER/}} prescribed initial state followed by a long spin-up before the first assimilation cycle. The dimension $d$ is the number of spatial discretization points. The parameters for the test setup are listed in Table \ref{tab:test-settings}.
For computational efficiency, the approximate score function in the EnSF reverse SDE is evaluated using a minibatch size of one. For both tests, the observation takes the form: 
\begin{align}
     Y_t &= \text{arctan}(X_t) + \gamma_t, \ \gamma_t \sim \mcal{N}(0, \Gamma), \ \ \Gamma = 0.05^2 I. \label{eg5}
\end{align}
An ensemble of size $20$ is used throughout and the likelihood and bridge localization radii are set to $r=0$ and $r'=2$, respectively. Tests were performed for 10 independent trials. All experiments are conducted on a MacBook Air with Apple MPS acceleration, equipped with four performance cores and six efficiency cores. The following observations can be made.
\begin{table}[htbp]
\centering
\caption{Numerical settings for the Lorenz-96 and Kuramoto--Sivashinsky filtering experiments.}
\label{tab:test-settings}
\resizebox{0.7\textwidth}{!}{
\begin{tabular}{lcc}
\toprule
Parameter & Lorenz-96 & Kuramoto--Sivashinsky \\
\midrule
State dimension $d$ & $10,000$ & $1,024$ \\
Forcing/domain parameter & $F=8$ & $\mathrm{DL}\footnote{Dimension less domain-length parameter.}=128$ \\
Signal time step & $\Delta t=0.005$ & $\Delta t_{\mathrm{KS}}=0.25$ \\
Model-noise standard deviation & $0.1$ & $0$ \\
Model steps between observations & $1$ & $4$ \\
Time between observations & $0.005$ & $1.0$ \\
Spin-up length & $1{,}000$ steps & $2{,}150$ ETD--RK4 steps \\
Number of filtering steps & $600$ & $600$ \\
Observation-noise standard deviation & $0.05$ & $0.05$ \\
\midrule
EnSBF artificial-time steps & $25$ & $50$ \\
EnSF reverse-time steps & $100$ & $50$ \\
\bottomrule
\end{tabular}
}
\end{table}

\begin{enumerate}
    \item The localized EnSBF generally requires fewer artificial-time discretization steps than EnSF to achieve comparable or better accuracy. For the $10,000$-dimensional Lorenz-96 model, localized EnSBF attains a lower RMSE using only $25$ artificial-time steps, whereas EnSF uses $100$ reverse-time steps (Figure \ref{fig:10000dsub1}). For the Kuramoto--Sivashinsky equation, both methods use $50$ artificial-time steps; nevertheless, the EnSBF estimate preserves finer spatial structures and exhibits higher resolution, as shown in Figure~\ref{fig:comparison_KS_states}. These results suggest that localization allows EnSBF to exploit the spatial structure of the underlying system more effectively.

    One possible explanation for the difference in the required number of artificial-time steps is that EnSBF initializes the bridge process at the mean of the forecast ensemble, which is already located near the filtering target. In contrast, at each data-assimilation step, EnSF transports a standard Gaussian reference distribution toward the target filtering distribution, and the two distributions may be substantially separated.

    \item Despite its strong state-tracking performance in high dimensions, the computational cost of localized EnSBF remains significant. Its complexity is of order $NB^2d$, where $N$ is the number of artificial-time steps, $B$ is the ensemble size, and $d$ is the state dimension. Figure \ref{fig:time_compare} shows that the run time scales linearly with $d$ while quadratically with $B$. Under the present KS configuration, one data-assimilation cycle requires approximately $0.068$ seconds for EnSBF and $0.033$ seconds for EnSF with minibatch size one. Moreover, numerical experiments indicate that simply doubling the number of reverse-time steps in EnSF does not yield accuracy comparable to that of EnSBF under the same state dimension.

\begin{figure}[h]
    \centering 
    \includegraphics[width=0.75\textwidth]{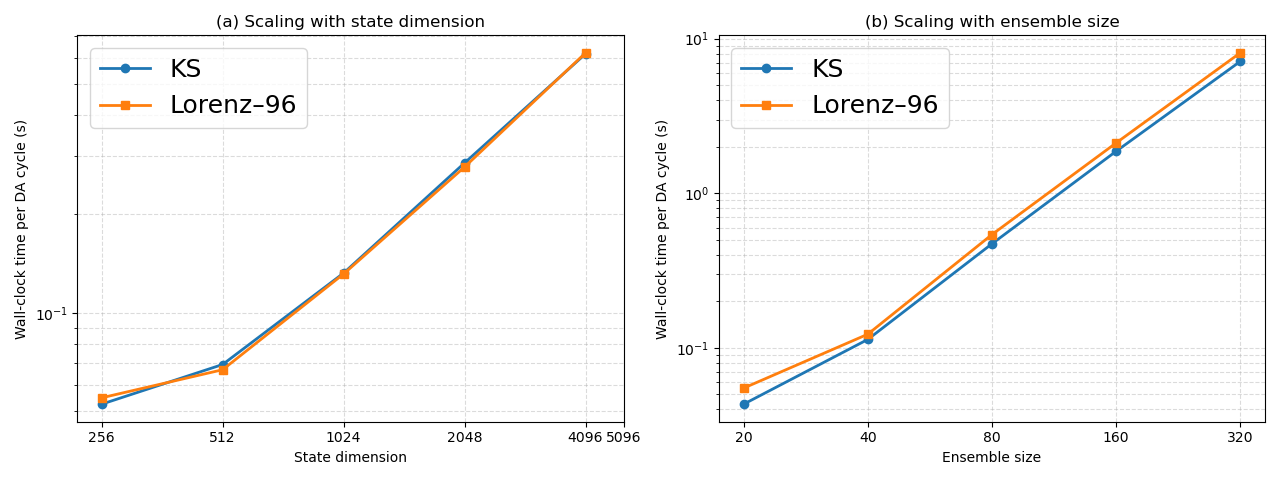} 
    \caption{Wall-clock time study for the KS and Lorenz–96 models, with $N=100$ artificial-time steps used in all experiments. Panel (a) shows the computational cost as the state dimension increases, with the ensemble size fixed at $B=20$. Panel (b) shows the computational cost as the ensemble size increases, with the state dimension fixed at $d=256$.} 
    \label{fig:time_compare} 
\end{figure}

    For the Lorenz--96 model, the two methods achieve broadly comparable accuracy, but the computational difference is more pronounced: EnSBF requires approximately $1.07$ seconds per data-assimilation cycle, whereas EnSF requires only $0.07$ seconds. This increased cost is primarily due to the quadratic dependence of EnSBF on the ensemble size together with the very large state dimension ($d=10,000$).
\end{enumerate}

In summary, although localization introduces approximation bias, the localized EnSBF remains effective for high-dimensional filtering problems and can achieve high accuracy using relatively few artificial-time steps. Nevertheless, its overall computational cost may still exceed that of EnSF, particularly in very high dimensions. The development of more computationally efficient localized EnSBF implementations is therefore left for future work.

\begin{figure}[htbp]
    \centering
    \begin{subfigure}[b]{0.49\textwidth}
        \includegraphics[width=\textwidth]{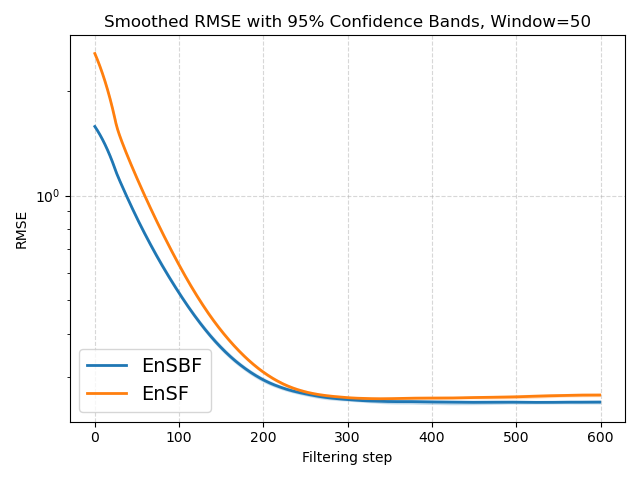}
        \caption{Smoothed RMSE}
        \label{fig:10000dsub1}
    \end{subfigure}
    \hspace{-0.6cm}
    \begin{subfigure}[b]{0.52\textwidth}
        \includegraphics[width=\textwidth]{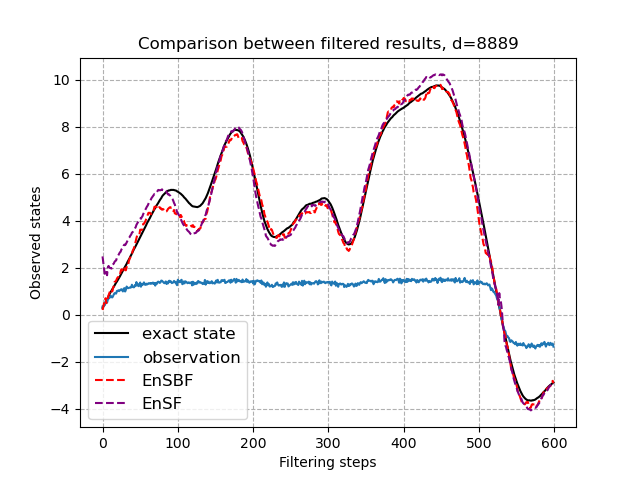}
        \caption{Representative coordinate $i=8889$}
        \label{fig:10000dsub2}
    \end{subfigure}
    \caption{Comparison between localized EnSBF and EnSF, Lorenz-96. Total  $d=10,000$}
    \label{fig:10000-d state}
\end{figure}

\begin{figure}[h]
    \centering 
    \includegraphics[width=0.95\textwidth]{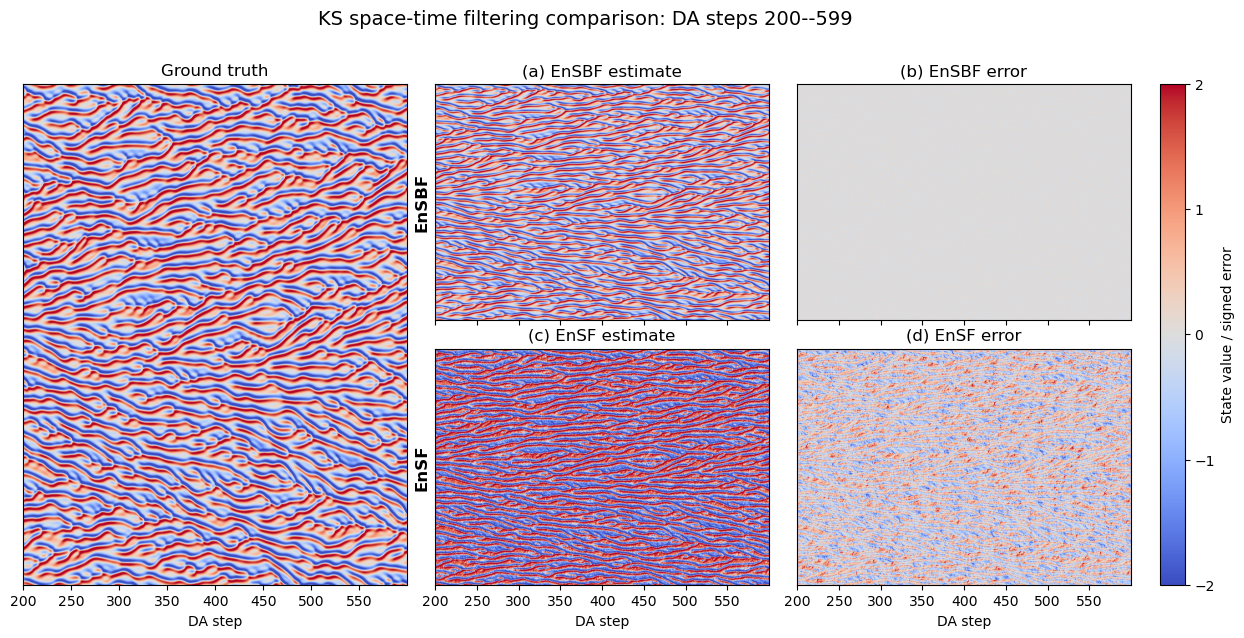} 
    \caption{Comparison between the space-time filtering of KS PDE under the localized EnSBF and the EnSF after the smoothed RMSE stabilizes. Figure on the left shows the ground truth. The top figure in the middle column is the EnSBF estimate and the bottom presents the estimates from EnSF. The top figure in the last column shows the error of EnSBF and bottom figure presents the EnSF error. } 
    \label{fig:comparison_KS_states} 
\end{figure}

\section{Conclusion and future work}
In this work, we developed a novel nonlinear stochastic filtering method, namely the Ensemble Schr{\"o}dinger Bridge Filter. The proposed method demonstrates competitive performance compared with classical benchmark methods, including the Ensemble Kalman Filter and the Particle Filter (PF), in moderately high-dimensional filtering problems with nonlinear and chaotic signal dynamics. The numerical results also suggest that the EnSBF is less sensitive to the ensemble size than the PF. The proposed approach is both training-free and derivative-free. Moreover, in its original form,
EnSBF avoids the posterior score-approximation structural error that appears in the EnSF method \cite{feng1}. In low-dimensional regimes, the EnSBF demonstrates more reliable performance than the EnSF, especially when the observation noise is relatively large.

At present, the main limitation of the original EnSBF is that its performance becomes less stable in very high-dimensional settings so that other techniques such as localization must be introduced. This technique, though effective in state estimation in high dimensions, introduces model structural bias. Future work will therefore focus on refining the analysis procedure so that the method remains effective in very high-dimensional filtering problems while limiting the bias level. We also plan to address the sparse observation limitation and  establish a rigorous convergence theory for EnSBF in its original form.

\newpage
\bibliographystyle{apacite}

\section{Appendix}
\subsection{Discussion on the Schr{\"o}dinger bridge problem and the control formulation.}
The following theorem from \cite{leonard} Theorem 2.8 provides characterizations of the solution to the SB problem. 
\begin{theorem}
Let $\nu, \mu << \mcal{L}_d$, then the SB problem admits a unique solution $\bbP^* = \int f^*(x) g^*(y) d \mathbb{Q}_{\sigma^2}^{xy} dx dy$ where $f^*,g^*$ are $\mcal{L}_d$-measurable  nonnegative functions on $\bR^d$ satisfying the Schr{\"o}dinger system of equations: 
\begin{align}
\begin{cases}
    f^*(x)\E_{\mathbb{Q}_{\sigma^2}}[g^*(X_1)|X_0=x]= \frac{d \nu}{d \mcal{L}_d}(x), &\mcal{L}_d-a.e. \\
    g^*(y)\E_{\mathbb{Q}_{\sigma^2}}[f^*(X_0)|X_1=y]= \frac{d \mu}{d \mcal{L}_d}(y), &\mcal{L}_d-a.e. .
\end{cases}
\end{align}
\end{theorem}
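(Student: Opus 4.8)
The plan is to reduce the dynamic minimization over path measures to a static entropic optimal transport problem over endpoint couplings, solve that static problem by a convex/variational argument, and then translate the resulting optimality conditions into the Schr{\"o}dinger system. First I would exploit the Markov (indeed reversible) structure of the reference $\mathbb{Q}_{\sigma^2}$ together with the chain rule for relative entropy. Disintegrating an admissible $\bbP$ along its endpoint pair $(X_0,X_1)$, write $\bbP=\int \bbP^{xy}\,d\pi(x,y)$, where $\pi=(X_0,X_1)\#\bbP$ is the endpoint coupling and $\bbP^{xy}$ the conditional bridge; doing the same for $\mathbb{Q}_{\sigma^2}$ yields the additive decomposition
\begin{align}
\mathcal{H}(\bbP\,|\,\mathbb{Q}_{\sigma^2}) = \mathcal{H}(\pi\,|\,R) + \int \mathcal{H}(\bbP^{xy}\,|\,\mathbb{Q}_{\sigma^2}^{xy})\, d\pi(x,y),
\end{align}
where $R=(X_0,X_1)\#\mathbb{Q}_{\sigma^2}$ is the two-time marginal of the reference and $\mathbb{Q}_{\sigma^2}^{xy}$ its pinned bridge. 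The constraints $\bbP_0=\nu,\ \bbP_1=\mu$ act only on $\pi$, while the second term is nonnegative and vanishes exactly when $\bbP^{xy}=\mathbb{Q}_{\sigma^2}^{xy}$ for $\pi$-a.e. $(x,y)$. Hence any minimizer must be a mixture of reference bridges, $\bbP^*=\int \mathbb{Q}_{\sigma^2}^{xy}\, d\pi^*(x,y)$, and the problem collapses to minimizing $\mathcal{H}(\pi\,|\,R)$ over couplings $\pi\in\Pi(\nu,\mu)$.

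Next I would establish existence and uniqueness of the static minimizer $\pi^*$. Feasibility with finite cost is where the hypothesis $\nu,\mu\ll\mcal{L}_m$ enters: it furnishes a coupling of finite relative entropy (for instance one assembled from the endpoint densities against the reference transition kernel), so the infimum is finite. Existence then follows from lower semicontinuity of $\pi\mapsto\mathcal{H}(\pi\,|\,R)$ together with compactness of its sublevel sets intersected with the tight, closed constraint set $\Pi(\nu,\mu)$, and uniqueness from strict convexity of relative entropy on this convex set.

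The crux is extracting the product structure $\tfrac{d\pi^*}{dR}(x,y)=f^*(x)g^*(y)$. I would argue by a first-order variational (Lagrangian) condition: perturbing $\pi^*$ by signed measures $\eta$ with vanishing marginals preserves the constraints to first order, so optimality forces $\int \log\tfrac{d\pi^*}{dR}\, d\eta=0$ for all such $\eta$. Since the annihilator of the space of zero-marginal measures is precisely the additively separable functions, this gives $\log\tfrac{d\pi^*}{dR}(x,y)=\varphi(x)+\psi(y)$ a.e., i.e. $\tfrac{d\pi^*}{dR}=f^*g^*$ with $f^*=e^{\varphi},\ g^*=e^{\psi}$ nonnegative; substituting into $\bbP^*=\int \mathbb{Q}_{\sigma^2}^{xy}\,d\pi^*$ reproduces the stated form. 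Finally, imposing the two marginal constraints on this product density and writing out the time-$0$ and time-$1$ marginals through the reference transition density (so that $\int g^*(y)\,p_\sigma(x,y)\,dy=\E_{\mathbb{Q}}[g^*(X_1)|X_0=x]$, and symmetrically) recovers
\begin{align}
f^*(x)\,\E_{\mathbb{Q}}[g^*(X_1)|X_0=x]=\frac{d\nu}{d\mcal{L}_m}(x), \qquad g^*(y)\,\E_{\mathbb{Q}}[f^*(X_0)|X_1=y]=\frac{d\mu}{d\mcal{L}_m}(y).
\end{align}

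I expect the main obstacle to be making the variational step rigorous: the relative entropy is taken against the \emph{unbounded}, merely $\sigma$-finite measure $\mathbb{Q}_{\sigma^2}$, so I must control the integrability of $\log\tfrac{d\pi^*}{dR}$ and justify the admissibility of the perturbations and the interchange of variation and integration before the separability conclusion can be drawn. The existence and (up to the scaling $f^*\!\to c f^*,\ g^*\!\to g^*/c$) uniqueness of solutions to the resulting Schr{\"o}dinger system — equivalently, attainment in the dual problem — is the other delicate point, typically settled either by a Sinkhorn/iterative-proportional-fitting fixed-point argument or by a dual attainment theorem, and I would cite \cite{leonard} for the full technical treatment.
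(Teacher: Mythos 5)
The paper itself does not prove this theorem: it is quoted verbatim from \cite{leonard} (Theorem 2.8), so there is no in-paper proof to compare against line by line. Your outline is the standard argument for it, and it is in fact the same strategy the paper deploys explicitly for the generalized problem in Section 2.2 --- disintegrate $\mathcal{H}(\bbP|\mathbb{Q}_{\sigma^2})$ along the endpoint pair, observe that the constraints see only the coupling $\pi$ so the minimizer must be a mixture of reference bridges, solve the static problem by a Lagrangian first-order condition to get the product density $f^*(x)g^*(y)$, and read off the Schr\"odinger system from the two marginal constraints. So in approach you are aligned with both the cited source and the paper's own later computation.

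One step in your sketch is genuinely wrong as written, and it is worth flagging because the paper's statement of the theorem inherits the same looseness. You claim that $\nu,\mu\ll\mcal{L}_m$ ``furnishes a coupling of finite relative entropy,'' citing $\nu\otimes\mu$ assembled against the reference kernel. Absolute continuity alone does not give this: with $R(dx\,dy)=h_{\sigma^2}(0,x,1,y)\,dx\,dy$ one has
\begin{align}
\mathcal{H}(\nu\otimes\mu\,|\,R)=\mathcal{H}(\nu\,|\,\mcal{L}_m)+\mathcal{H}(\mu\,|\,\mcal{L}_m)+\tfrac{d}{2}\log(2\pi\sigma^2)+\tfrac{1}{2\sigma^2}\!\int\!|x-y|^2\,d\nu\,d\mu,
\end{align}
which is finite only if both marginals have finite entropy relative to Lebesgue measure and finite second moments; neither follows from $\nu,\mu\ll\mcal{L}_m$. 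L\'eonard's Theorem 2.8 carries exactly such additional integrability hypotheses, and without them the infimum can be $+\infty$ and the minimizer need not exist. Your remaining caveats (integrability of $\log\frac{d\pi^*}{dR}$ in the variational step, the measure-theoretic fact that the annihilator of zero-marginal perturbations consists of additively separable functions, and dual attainment for the Schr\"odinger system) are correctly identified as the technical load-bearing points and are reasonably deferred to \cite{leonard}.
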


If we denote $f_0(x):=f^*(x)$ and $g_1(y):=g^*(y)$, and that $q(x)=\frac{d \nu}{d \mcal{L}_d}(x)$, $p(x)=\frac{d \mu}{d \mcal{L}_d}(x)$ the density of $\nu$ and $\mu$, the transition density $h_{
\sigma^2
}(s,x,t,y):= (2 \pi \sigma^2 (t-s))^{-\frac{d}{2}} \exp(-\frac{|x-y|^2}{2\sigma^2 (t-s)})$, then we have: 
\begin{align} \label{og_design_seq}
    f_0(x) \int h_{\sigma^2}(0,x,1,y) g_1(y) dy &= q(x) \\
    g_1(y) \int h_{\sigma^2}(0,x,1,y) f_0(x) dx &= p(y). 
\end{align}
Further denoting $\int h_{\sigma^2}(0,x,1,y) g_1(y) dy=g_0(x)$ and $\int h_{\sigma^2}(0,x,1,y) f_0(x) dx=f_1(y)$, we have that the Schr{\"o}dinger system can be characterized by: 
\begin{align}
    q(x)=f_0(x)g_0(x) \nonumber\\
    p(y)=f_1(y)g_1(y)
\end{align}
Defining
\begin{align}
    f_t(x)&:=\int_{\mathbb{R}^d} h_{\sigma^2}(0,z,t,x)f_0(z)\,dz,\\
    g_t(x)&:= \int_{\mathbb{R}^d} h_{\sigma^2}(t,x,1,y)g_1(y)\,dy,
\end{align}
We have the following forward and backward equations on $(0,1) \times \bR^d$ \cite{chen1}:
\begin{align}
\begin{cases}
    \partial_t f_t(x) = \frac{\sigma^2}{2} \Delta f_t(x),  &\nonumber\\
    \partial_t g_t(x) = -\frac{\sigma^2}{2} \Delta g_t(x).
\end{cases}
\end{align}
Following again \cite{chen1}, let $q_t$ denote the density of $\mathbb{P}^*_t$, then this density function is given via: 
\begin{align}
    q_t(x)=f_t(x)g_t(x).
\end{align}
whose solution (optimal control) is given by $ \alpha^*_t(x)=\sigma^2 \nabla_x \log \int h_{\sigma^2}(t,x,1,y) g_1(y) dy$:

Based on the discussion above, we discuss an example with designed initial and target distribution. Denoting $\phi_{\sigma}(x):=\frac{1}{(2 \pi \sigma^2)^{\frac{d}{2}}}\exp(-\frac{|x|^2}{2 \sigma^2})$ the density function of a Gaussian distribution. Assume that $\mu\ll\mathcal{L}_d$, and denote its density by $p$. We take the initial
distribution to be $\nu=\delta_0$. Although this choice is not absolutely
continuous with respect to $\mathcal{L}^d$, it gives a degenerate
Schr{\"o}dinger Bridge formulation that leads to the
Schr{\"o}dinger--F{\"o}llmer sampler, and \eqref{og_design_seq} may be interpreted formally in a measure-valued sense. Taking $f(dx)=\delta_0(x)$, $g_1(y)=\frac{p(y)}{\phi_\sigma(y)}$ we can check that: 
    \begin{align}\label{part1_sys}
        g_1(y) \int h_{\sigma^2}(0,x,1,y) f_0(x) dx &= p(y) \\ 
       f_0(x)\int h_{\sigma^2}(0,x,1,y) g_1(y) dy &= \delta_0(x)
    \end{align}
    since the first equation gives $g_1(y)=\frac{p(y)}{\phi_\sigma(y)}$ and the second equation holds due to the equality $\int h_{\sigma^2}(0,0,1,y) g_1(y) dy=1$. 
Theorem 3.1 and 3.2 in \cite{sb8} shows that the SB problem also has a control formulation which is listed as Problem \ref{problem_control} whose solution (optimal control) is given by $ \alpha^*_t(x)=\sigma^2 \nabla_x \log \int h_{\sigma^2}(t,x,1,y) g_1(y) dy$. 
    
Now set $\sigma=1$.  With a slight abuse of notation, we use the same symbols
$\mu$ and $\mu_W$ to denote both the probability measures and their densities
with respect to $\mathcal{L}_d$. Then by the solution of Problem \ref{problem_control} we have, 
    \begin{align}
        \alpha_t(x)= \nabla_x \log \E_{Z\sim \mu_W}[\frac{\mu}{\mu_W}(x+ \sqrt{T-t} Z)] \label{scheme1_alpha}.
    \end{align}
Therefore, to sample from the target distribution $\mu$, it suffices to start from the initial distribution $\delta_0$ and simulate the controlled dynamics
    \begin{align}\label{scheme_sde}
        dX_t = \alpha_t(X_t)dt + \sigma dW_t , && X_0 \sim \delta_0(x), && T=1. 
    \end{align}

\subsection{Schr{\"o}dinger bridge problem with general reference measure}
For a generalized Schr{\"o}dinger bridge problem, we consider a reference process with time-dependent linear drift and deterministic diffusion coefficient.

\begin{Problem}
(\textbf{Schr{\"o}dinger bridge problem with general coefficients}.)
\label{sb_prob2}
Let $\mathbb Q^x$ denote the law on path space induced by the SDE
\begin{align}
    dX_t = b_t(X_t)\,dt + \sigma_t\,dW_t,
    \qquad X_0=x,
    \label{general_reference_sde}
\end{align}
where $\sigma_t$ is a deterministic, time-dependent function. Given an initial density $q_0$, define the reference path measure $\mathbb Q := \int_{\mathbb R^d} \mathbb Q^x q_0(x)\,dx$. 
The Schr{\"o}dinger bridge problem is to find a probability measure
$\mathbb P^*\in\mathcal P(\Omega)$ such that
\begin{align}
    \mathbb P^*\in \arg\min_{\mathbb P\in\mathcal P(\Omega)} \mathcal H(\mathbb P\mid \mathbb Q),
    \label{general_sb_problem}
\end{align}
subject to the marginal constraints
\[
    \mathbb P_0=\nu,
    \qquad
    \mathbb P_1=\mu.
\]
Here, $\mathbb P_t:=X_t\#\mathbb P$ denotes the time-$t$ marginal of
$\mathbb P$, and the relative entropy is defined by
\begin{align}
\mathcal H(\mathbb P\mid \mathbb Q)
=
\begin{cases}
    \displaystyle
    \int_{\Omega}
    \log\left(\frac{d\mathbb P}{d\mathbb Q}\right)
    d\mathbb P,
    & \text{if } \mathbb P \ll \mathbb Q, \\[1em]
    +\infty,
    & \text{otherwise}.
\end{cases}
\end{align}
\end{Problem}
By the disintegration of measure (\cite{leonard}, A.8) for the relative entropy:
\begin{align}
    \mcal{H}(\mathbb{P}|\mathbb{Q})=\mcal{H}(\mathbb{P}_{01}|\mathbb{Q}_{01})+ \int \mcal{H}(\mathbb{P}^{xy}|\mathbb{Q}^{xy}) \mathbb{P}_{01}(dxdy)
\end{align}
where $\mathbb{P}^{xy}=\mathbb{P}(\cdot | X_0=x, X_1=y)$,$\mathbb{Q}^{xy}=\mathbb{Q}(\cdot | X_0=x, X_1=y)$ are the conditional probabilities, and $\mathbb{P}_{01},\mathbb{Q}_{01}$ are the joint distributions of $(X_0,X_1)$ under the two measures. Since the second term in the entropy disintegration formula is nonnegative, it is minimized by choosing $\mathbb{P}^{xy}=\mathbb{Q}^{xy}$, the original problem \ref{sb_prob2} is reduced to the following static Schr{\"o}dinger Bridge problem for end point coupling. 
\begin{Problem}
(\textbf{Static Schr{\"o}dinger bridge problem}.)
\label{static_sb_prob}
Let $\mathbb Q_{01}\in\mathcal P(\mathbb R^d\times \mathbb R^d)$ denote the
joint distribution of $(X_0,X_1)$ under the reference measure $\mathbb Q$
introduced in Problem \ref{sb_prob2}. Given prescribed marginal distributions
$\nu$ and $\mu$, find
\begin{align}
    \pi^*\in\arg\min_{\pi\in\mathcal P(\mathbb R^d\times\mathbb R^d)}KL(\pi\mid \mathbb Q_{01}), \qquad \pi_0=\nu,\quad \pi_1=\mu,
    \label{static_sb}
\end{align}
where $\pi_0$ and $\pi_1$ denote the first and second marginals of $\pi$,
respectively.
\end{Problem}

The problem is solved by using the Lagrange multipliers, i.e. let: 
\begin{align}
    \mathcal{L}(\pi, \lambda, \gamma):= \int \pi(x,y) \log \frac{\pi(x,y)}{\mathbb{Q}_{01}(x,y)} dx dy + \int \lambda(x) \big(\int \pi(x,y)dy - \rho_{\nu}(x) \big) dx + \int \gamma(y) \big(\int \pi(x,y)dx - \rho_{\mu}(y) \big)dy
\end{align}
Setting the first variation to 0 gives: 
\begin{align}
    1+\log \pi(x,y) -\log \mathbb{Q}_{01}(x,y) + \lambda(x)+\gamma(y)=0.
\end{align}
Noting that $\mathbb{Q}_{01}(x,y)=q(0,x,1,y)q_0(x)$ where $q(0,x,1,y)$ is the transition density. After rearranging, we obtain 
\begin{align}
    \pi(x,y) = f_0(x)q(0,x,1,y)g_1(y) \label{eq_pi}
\end{align}
for some functions $f_0(x)$ and $g_1(y)$ which satisfy the following integral constraint:
\begin{align}
    f_0(x) \int q(0,x,1,y) g_1(y) dy &= \rho_{\nu}(x) \label{system_21} \\ 
    g_1(y)  \int q(0,x,1,y) f_0(x) dx &= \rho_{\mu}(y). \label{system_22}
\end{align}
Defining 
\begin{align}
    g_0(x)&:=\int q(0,x,1,y) g_1(y) dy \\ 
    f_1(y)&:=\int q(0,x,1,y) f_0(x) dx, 
\end{align}
we then obtain the following system of equations: 
\begin{align}
    f_0(x)g_0(x)&=\rho_{\nu}(x) \\ 
    g_1(y)f_1(y)&=\rho_{\mu}(y). 
\end{align}
Thus, using \eqref{eq_pi} the measure defined by 
\begin{align}
    \mathbb{P}^*(\cdot) = \int \mathbb{Q}^{xy}(\cdot) \pi(x,y)dxdy
\end{align}
solves the Schr{\"o}dinger bridge problem \ref{sb_prob2}.

Finally, similar to the earlier construction, we want to correct the drift in the reference SDE so that the two marginal distributions are interpolated via the corrected process. Define
\begin{align}
h(t,x)&:=\int_{\mathbb R^d} q(t,x,1,y)g_1(y)\,dy.
\end{align}
According to Theorem 3.2 in \cite{sb8}, in the Brownian reference case with $q$ being the Gaussian transition kernel, $\sigma=I_d$, the optimal drift correction is given by 
\begin{align}
    \alpha(t,x) =\nabla_x\log h(t,x) = \nabla_x
    \log \int_{\mathbb R^d} q(t,x,1,y)g_1(y)\,dy .
    \label{control2}
\end{align}
More generally, when the reference process has diffusion coefficient $\sigma_t$, the controlled drift correction is given by
\begin{align}
     \alpha(t,x) = \sigma_t\sigma_t^\top \nabla_x \log h(t,x).
\end{align}
And this drift correction solves the corresponding stochastic optimal control formulation of the Schr{\"o}dinger bridge problem.

\newpage

\end{document}